\pdfoutput=1

\PassOptionsToPackage{most}{tcolorbox}
\newcommand{\tamulogofooter}{}
\documentclass[11pt,letterpaper]{mystyle}
\usepackage{fancyhdr}

\usepackage{tikz}
\usepackage[utf8]{inputenc} 
\usepackage[T1]{fontenc}
\usepackage[numbers]{natbib}
\usepackage{adjustbox}
\usepackage{breakurl}    
\usepackage{hyperref}
\usepackage[edges]{forest}
\usepackage{subcaption}
\usepackage{soul}
\usepackage{multirow}
\usepackage[utf8]{inputenc} 
\usepackage{booktabs}       
\usepackage{amsfonts}       
\usepackage{nicefrac}      
\usepackage{microtype}     
\usepackage{xcolor}        
\usepackage{colortbl}
\usepackage{enumitem}
\usepackage{amssymb}
\usepackage{wrapfig}
\usepackage{courier}
\usepackage{bxcoloremoji}
\usepackage{CJKutf8}
\usepackage{ragged2e}
\usepackage{longtable}
\usepackage[useregional]{datetime2}
\usepackage{threeparttable}
\usepackage{cleveref} 
\usepackage{wrapfig}
\usepackage{amsmath}
\usepackage{makecell}
\usepackage{tcolorbox}
\tcbuselibrary{theorems}

\newtcbtheorem{proposition}{Proposition}{
    colback=gray!5,
    colframe=black!75,
    fonttitle=\bfseries
}{prop}

\crefname{appsec}{Appendix}{Appendices}
\Crefname{appsec}{Appendix}{Appendices}

\definecolor{tamuMaroon}{HTML}{500000}
\colorlet{tamuGrayMaroon}{tamuMaroon!15!black}
\newcommand{\github}{\raisebox{-1.5pt}{\includegraphics[height=1.05em]{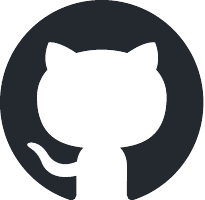}}}
\renewcommand{\today}{\DTMtoday}

\fancypagestyle{headstyle}{
    \fancyhead[C]{
        Spin-Weighted Spherical Harmonics Enable Complete and Scalable \protect$\mathrm{E}(3)$-Equivariant Networks
    }
}

\definecolor{hidden-red}{RGB}{205, 44, 36}
\definecolor{hidden-blue}{RGB}{194,232,247}
\definecolor{hidden-orange}{RGB}{243,202,120}
\definecolor{hidden-green}{RGB}{34,139,34}
\definecolor{hidden-pink}{RGB}{255,245,247}
\definecolor{hidden-black}{RGB}{20,68,106}
\definecolor{purple}{RGB}{144,153,196}
\definecolor{yellow}{RGB}{255,228,123}
\definecolor{hidden-yellow}{RGB}{255,248,203}
\definecolor{tkcolor}{RGB}{224,223,255}
\definecolor{darkblue}{rgb}{0, 0.40, 0.75}

\definecolor{tamuMaroon}{HTML}{500000}
\colorlet{abstractTextColor}{tamuMaroon!15!black}
\newcommand{\abstractstyle}{\color{abstractTextColor}}

\makeatletter
\let\oldabstract\abstract
\let\oldendabstract\endabstract
\renewcommand{\abstract}{\oldabstract\abstractstyle}
\renewcommand{\endabstract}{\oldendabstract}
\makeatother

\hypersetup{colorlinks=true, citecolor=darkblue, linkcolor=darkblue, urlcolor=darkblue}
\tcbset{
  aibox/.style={
    width=\linewidth,
    top=8pt,
    bottom=4pt,
    colback=blue!6!white,
    colframe=black,
    colbacktitle=black,
    enhanced,
    center,
    attach boxed title to top left={yshift=-0.1in,xshift=0.15in},
    boxed title style={boxrule=0pt,colframe=white,},
  }
}

\newtcolorbox{TakeawayBox}[2][]{takeawaybox,title=#2,#1}

\title{Spin-Weighted Spherical Harmonics Enable Complete and Scalable E(3)-Equivariant Networks}

\author{
  Chenxing Liang$^{1*}$, 
  Yuchao Lin$^{1*}$, 
  Andrii Kryvenko$^{1*}$, 
  Wendi Yu$^{1}$, 
  \textbf{Chuan Li}$^{2}$, \\
  \textbf{Jianwen Xie}$^{2}$, 
  \textbf{Xiaofeng Qian}$^{3,4,5}$, 
  \textbf{Shuiwang Ji}$^{1,3,6}$
\\

\vspace{1mm}

\normalfont{
$^1$ Department of Computer Science and Engineering, Texas A\&M University\vspace{-5pt} \\
$^2$ Lambda, Inc.\vspace{-5pt} \\
$^3$ Department of Materials Science and Engineering, Texas A\&M University\vspace{-5pt} \\
$^4$ Department of Electrical and Computer Engineering, Texas A\&M University\vspace{-5pt} \\
$^5$ Department of Physics and Astronomy, Texas A\&M University\vspace{-5pt} \\
$^6$ Department of Mechanical Engineering, Texas A\&M University\vspace{-5pt} \\
}}

\begin{document}

\begin{abstract}

  \textbf{\large Abstract:}
  \vspace{1mm}

$\mathrm{E}(3)$-equivariant networks are promising for 3D atomistic system modeling, yet their scalability is limited by the $O(L^6)$ complexity of the Clebsch-Gordan Tensor Product (CGTP). The recently proposed Gaunt Tensor Product (GTP) reduces the complexity but is unable to capture the antisymmetric paths, resulting in incomplete expressivity. In this work, we present \textit{SpinGTP}, an approach to overcome the GTP incompleteness by generalizing from scalar functions to Spin-Weighted Spherical Harmonics (SWSH). By relying on the algebraic properties of SWSH, SpinGTP recovers the missing antisymmetric interactions while maintaining the asymptotic efficiency of GTP. It also allows for a more expressive equivariant basis that naturally accounts for the parity-odd components of tensor products. We evaluate SpinGTP across diverse benchmarks, including Tetris, 3BPA, SPICE-MACE-OFF, and OC20. Our results show that SpinGTP achieves accuracies comparable to full CGTP. Notably, by explicitly capturing antisymmetric paths, SpinGTP exhibits superior performance in tasks involving chiral materials and non-centrosymmetric geometries. This work provides a complete, scalable, and mathematically rigorous path toward high-order equivariance in large-scale 3D atomistic system simulations. Our code is publicly available at \url{https://github.com/divelab/SpinGTP/}.

  \vspace{5mm}

  

  \vspace{1mm}
  \textbf{Keywords}: AI for Science, Equivariant
Networks, Spin-Weighted Spherical Harmonics, 3D Atomistic Simulations
  \vspace{6mm}








   \textbf {* These authors contributed equally}

  \vspace{3mm}


  \github{} \textbf{Github Repository}: 
\url{https://github.com/divelab/SpinGTP/}



    \vspace*{0.2in}

\end{abstract}

\maketitle

\pagestyle{headstyle}
\thispagestyle{empty}

\newpage
\vspace{2em}
\tableofcontents

\newpage

\section{Introduction}
\label{sec:intro}
The integration of physical symmetries into deep learning models has become a cornerstone of modern artificial intelligence for physical sciences~\cite{zhang2025artificial,bronstein2021geometric,villar2021scalars,kondor2025principles,fei2024rotation,fu2025augmenting}. By enforcing equivariance, neural networks can model 3D atomic environments with superior data efficiency and generalization compared to standard architectures. Central to these models is the Clebsch-Gordan Tensor Product (CGTP), a fundamental operation that enables interactions between features of different angular frequencies, known as irreducible representations (irreps)~\cite{khersonskii1988quantum, thomas2018tensor, anderson2019cormorant}. Despite its expressive power, the CGTP suffers from a computational complexity of $O(L^6)$ with the maximum angular degree $L$. This complexity often forces practitioners to limit $L$ to small values, sacrificing the high-order geometric information necessary for modeling complex molecular interactions and interatomic potentials.

Recent efforts to overcome this bottleneck have focused on accelerating tensor products through alternative mathematical formulations~\citep{passaro2023reducing,lin2025tensor}. A notable advance is the Gaunt Tensor Product (GTP) \cite{luo2024enabling}, which maps tensor products of irreps to pointwise multiplications of spherical functions in 2D Fourier basis. By leveraging the convolution theorem and Fast Fourier Transforms (FFT), GTP reduces the asymptotic complexity from $O(L^6)$ to $O(L^3)$. 
However, this efficiency comes with an expressivity loss. GTP retains only coupling paths where the sum of irrep degrees $\ell_1+\ell_2+\ell_3$ is even, and removes odd degree-sum paths~\citep{xie2024price}. These missing paths include antisymmetric interactions such as the $[\ell_1=1,\ell_2=1,\ell_3=1]$ vector cross-product path, as well as couplings needed to form pseudoscalar and axial quantities. This limits scalar GTP on parity-sensitive tasks, especially those involving chiral geometries. We refer to this limitation as the antisymmetric gap.

To address the antisymmetric gap, recent work~\cite{xie2026asymptotically} shows that the antisymmetric paths in scalar GTP can be recovered by lifting scalar spherical signals to irrep-valued signals. Using Vector Spherical Harmonics (VSH), they introduce the Vector Signal Tensor Product (VSTP), which recovers missing antisymmetric paths through generalized Gaunt products with Wigner $9j$ symbol. 



In this work, we pursue a more direct mathematical path to this problem by introducing the Spin-Weighted Gaunt Tensor Product (SpinGTP). Rather than lifting scalar spherical signals to vector-valued signals, SpinGTP works directly with Spin-Weighted Spherical Harmonics (SWSHs) and their generalized Gaunt integral with Wigner $3j$ symbols. The spin-weight index replaces the scalar zero-order coupling with a signed spin selection rule, allowing odd degree-sum paths that scalar GTP removes when the required spin sectors are present. We implement this operator in a real, parity-labeled SWSH basis for a rich and concrete representation.

We evaluate SpinGTP on a suite of benchmarks, including the Tetris and large-scale atomistic datasets such as 3BPA \cite{3BPA}, SPICE-MACE-OFF \cite{kovacs2025mace}, and OC20 \cite{chanussot2021open}. Our results demonstrate that SpinGTP-based networks achieves accuracy comparable to full $O(L^6)$ CGTP models while remaining as computationally efficient as the original GTP. Furthermore, we show that our model outperforms previous tensor product methods in predicting properties of chiral materials, showing that the inclusion of antisymmetric paths through SWSH is beneficial for 3D geometric deep learning of chiral geometries.

Our contributions are summarized as follows:
\begin{itemize}[leftmargin=15pt]
    \item We propose SpinGTP, an equivariant operation based on SWSH that restores the mathematical completeness to the Gaunt Tensor Product framework.
    \item We provide mathematical derivations showing that the SpinGTP formulation efficiently recovers the previously missing parity-odd interactions (antisymmetric paths) required for universal $\mathrm{E}(3)$ equivariance.
    \item We evaluate SpinGTP across a range of tasks, including Tetris, 3BPA,
SPICE-MACE-OFF, and OC20, demonstrating accuracy comparable to full CGTP
while yielding targeted improvements in chiral geometry modeling through the
recovery of antisymmetric paths.
\end{itemize}

\section{Related Work}

\textbf{Equivariant Network Architectures.} 
The landscape of $\mathrm{E}(3)$-equivariant modeling has evolved from early invariant descriptors such as SchNet \cite{SchNet} and DimeNet \cite{DimeNet++} toward steerable frameworks that leverage irreducible representations (irreps). Seminal architectures like NequIP \cite{nequip} and Allegro \cite{musaelian2023learning} demonstrated the superior sample efficiency of Clebsch-Gordan (CG) tensor products, while the Equiformer series~\cite{equiformer,equiformerv2} extended these principles to attention-based mechanisms. To alleviate the computational burden of edge-wise products, e2former\cite{li2026eformer} introduced a node-wise message-passing scheme. While alternative scalarization methods, such as PaiNN \cite{schutt2021equivariant} and NewtonNet \cite{haghighatlari2022newtonnet} offer increased throughput, they often bypass the full tensor product space, potentially sacrificing the formal universality required for complex geometric.

\textbf{Tensor Product Acceleration.}
The Clebsch-Gordan Tensor Product (CGTP) is a standard interaction operator in $\mathrm{E}(3)$-equivariant neural networks~\cite{khersonskii1988quantum}. However, direct CGTP scales as $O(L^6)$ with the maximum angular degree $L$, which limits the use of high-order features. Several recent methods reduce this cost by exploiting structure in equivariant architectures. eSCN~\cite{passaro2023reducing} aligns features with the edge direction during message passing, which sparsifies the equivariant convolution and reduces it to an $\mathrm{SO}(2)$ computation in the local edge frame. ~\cite{lin2025tensor} reduces tensor-product cost through low-rank tensor decomposition structure.~\cite{luo2024enabling} proposed the Gaunt Tensor Product (GTP), which evaluates interactions through spherical harmonic transforms and pointwise products, reducing the complexity to $O(L^3)$. GTP is efficient, but scalar GTP removes odd $\ell$-sum paths, including antisymmetric interactions such as the vector cross product~\cite{xie2024price}. To restore expressivity without reverting to $O(L^6)$ scaling, a Vector Spherical Harmonic (VSH) basis and $9j$ recoupling are introduced to recover missing antisymmetric paths \cite{xie2026asymptotically}. This framework provides complete algorithm to achieve true asymptotic speedups, reaching $O(L^4 \log^2 L)$ complexity via fast spectral transforms.




\begin {comment}
\section{Preliminaries}

\yuchao{We can maintain one to two equations for each, otherwise it is overloaded.}

\textbf{Gaunt Tensor Product (GTP).} 
To address the $O(L^6)$ complexity of the Clebsch-Gordan Tensor Product (CGTP), Luo et al.~\citep{luo2024enabling} proposed the Gaunt Tensor Product, which reinterprets the interaction of irreducible representations (irreps) as a multiplication of functions in the spatial domain.

The framework is built upon the \textit{Gaunt coefficient}, defined as the integral of the product of three scalar spherical harmonics over the unit sphere $\mathrm{S}^2$:
\begin{equation}
\label{eq:gaunt_coefficient}
\mathcal{G}(\ell_1, m_1, \ell_2, m_2, \ell_3, m_3) = \int_{\mathrm{S}^2} Y_{\ell_1 m_1}(\Omega) Y_{\ell_2 m_2}(\Omega) Y_{\ell_3 m_3}^*(\Omega) \, d\Omega.
\end{equation}
This integral represents the coupling strength between three angular momentum states. The key innovation of GTP is using the mathematical identity that connects these integrals to the Wigner 3-j symbols, and consequently to the Clebsch-Gordan coefficients:
\begin{equation}
\label{eq:gaunt_cg_bridge}
\mathcal{G}(\ell_1, m_1, \ell_2, m_2, \ell_3, m_3) = \sqrt{\frac{(2\ell_1+1)(2\ell_2+1)}{4\pi(2\ell_3+1)}} C_{(\ell_1, m_1)(\ell_2, m_2)}^{(\ell_3, m_3)} C_{(\ell_1, 0)(\ell_2, 0)}^{(\ell_3, 0)}.
\end{equation}
Eq.~\ref{eq:gaunt_cg_bridge} implies that the interaction between two equivariant features $\hat{\mathbf{a}}_{\ell_1 m_1}$ and $\hat{\mathbf{b}}_{\ell_2 m_2}$ can be computed by first projecting them into the spatial domain as spherical functions, $a(\Omega) = \sum \hat{a}_{\ell m} Y_{\ell m}(\Omega)$ and $b(\Omega) = \sum \hat{b}_{\ell m} Y_{\ell m}(\Omega)$. In this domain, the tensor product interaction is simply the pointwise multiplication. To achieve $O(L^3)$ efficiency, the signals are sampled on an equiangular grid and transformed into a 2D Fourier basis $\mathcal{F}$. By the convolution theorem, the multiplication is computed efficiently using Fast Fourier Transforms (FFT):
\begin{equation}
\label{eq:gaunt_fft_op}
\hat{\mathbf{c}}_{u,v} = \text{FFT} \left( \text{IFFT}(\hat{\mathbf{a}}_{u,v}) \odot \text{IFFT}(\hat{\mathbf{b}}_{u,v}) \right),
\end{equation}
where $\odot$ denotes pointwise multiplication and $(u,v)$ are the Fourier indices. We provide further mathmatical details regarding the grid sampling and the specific Fourier-basis conversion in Appendix \ref{app:gaunt_details}. 

\textbf{Spin-weighted Spherical Harmonics (SWSH).}
 Originally developed by Goldberg et al.~\citep{goldberg1967spin} for the study of gravitational radiation, SWSHs are functions on the sphere characterized by an integral spin weight $s \in \mathbb{Z}$ in addition to the standard degree $\ell$ and order $m$, with the constraint $|s| \leq \ell$. 

Geometrically, while standard harmonics ($s=0$) are scalar fields, SWSHs for $s \neq 0$ are sections of spin-weighted line bundles over $\mathrm{S}^2$. Under a local rotation of the tangent plane by an angle $\chi$, these functions transform as ${}_sY_{\ell m} \to {}_sY_{\ell m} e^{is\chi}$. They form a complete orthonormal basis for functions of a fixed spin weight:
\begin{equation}
    \int_{S^{2}} {}_sY_{\ell m} \, {}_sY_{\ell'm'}^* \, d\Omega = \delta_{\ell \ell'} \delta_{mm'}.
\end{equation}
The significance of SWSHs for equivariant learning lies in their triple integral properties. For three harmonics with spin weights satisfying the selection rule $s_1 + s_2 + s_3 = 0$, the \textit{Generalized Gaunt Integral} is:
\begin{equation}
\label{eq:generalized_gaunt}
\begin{aligned}
    \int_{\mathrm{S}^2} {}_{s_{1}}Y_{\ell_{1}m_{1}} \, {}_{s_{2}}Y_{\ell_{2}m_{2}} \, {}_{s_{3}}Y_{\ell_{3}m_{3}}^{*} \, d\Omega = & \sqrt{\frac{(2\ell_1+1)(2\ell_2+1)(2\ell_3+1)}{4\pi}} \\
    & \times \begin{pmatrix} \ell_{1} & \ell_{2} & \ell_{3} \\ m_{1} & m_{2} & -m_{3} \end{pmatrix} 
    \begin{pmatrix} \ell_{1} & \ell_{2} & \ell_{3} \\ s_{1} & s_{2} & -s_{3} \end{pmatrix}.
\end{aligned}
\end{equation}
Comparing Eq.~\ref{eq:generalized_gaunt} with the scalar identity in Eq.~\ref{eq:gaunt_cg_bridge}, we observe that the scalar ``zero-frequency'' term $C_{(\ell_1, 0)(\ell_2, 0)}^{(\ell_3, 0)}$ is replaced by a Wigner 3-j symbol involving the spin weights $s_i$. Crucially, whereas the scalar term vanishes for $\ell_1 + \ell_2 + \ell_3 \in 2\mathbb{Z} + 1$, the spin-weighted 3-j symbol is non-zero for \textit{both} even and odd parity sums when $s_i \neq 0$. This mathematical shift allows the \textbf{Spin-Weighted Gaunt Tensor Product} to recover the full spectrum of Clebsch-Gordan interactions—including antisymmetric paths—while maintaining the $O(L^3)$ efficiency of the FFT-based multiplication. We provide the specific differential operators used to generate SWSHs and the proof of completeness in Appendix \ref{app:swsh_theory}.

\end{comment}

\section{Gaunt Tensor Product of Spin-Weighted Spherical Harmonics} 


This section presents the methodology for the Spin-Weighted Gaunt Tensor Product (SpinGTP). We first review the standard Gaunt Tensor Product and identify its main expressivity limitation, the loss of antisymmetric paths with odd $\ell$-sum. We then introduce Spin-Weighted Spherical Harmonics (SWSH) and their spin-weighted Gaunt integral. To use this in a real-valued neural network, we build real SWSH bases and extend them with parity labels, allowing the model to represent antisymmetric paths when the required spin channels are present. Finally, we summarize the implementation strategy and the specialized equivariant layers in the architecture.

\subsection{Gaunt Tensor Product Efficiency and the Antisymmetry Gap} 
\label{sec:gtp_gap}


To reduce the $\mathcal{O}(L^6)$ cost of standard Clebsch-Gordan contractions, the scalar Gaunt Tensor Product (GTP)~\citep{luo2024enabling} represents equivariant interactions as pointwise products of spherical signals. The resulting coupling is given by a Gaunt coefficient, defined as the integral of two input spherical harmonics against an output spherical harmonic over $\mathrm{S}^2$ such that
{\small
\begin{equation}
\label{eq:gaunt_bridge}
G^{(\ell_3,m_3) }_{(\ell_1, m_1)(\ell_2, m_2)} = \int_{\mathrm{S}^2} Y_{\ell_1 m_1} Y_{\ell_2 m_2} Y_{\ell_3 m_3}^* d\Omega = \sqrt{\frac{(2\ell_1+1)(2\ell_2+1)}{4\pi(2\ell_3+1)}} C_{(\ell_1, m_1)(\ell_2, m_2)}^{(\ell_3, m_3)} C_{(\ell_1, 0)(\ell_2, 0)}^{(\ell_3, 0)}.
\end{equation}
}


As shown in~\Cref{eq:gaunt_bridge}, GTP retains a subset of Clebsch-Gordan paths where the scalar coupling $C_{(\ell_1, 0)(\ell_2, 0)}^{(\ell_3, 0)}$ is nonzero. This restriction simplifies the interaction, and the spatial product formulation enables an $\mathcal{O}(L^3)$ implementation via fast Fourier transform. The resulting computation can scale to high spectral resolutions that are costly for direct Clebsch-Gordan contractions.


Despite this efficiency, scalar GTP is constrained by the factor $C_{(\ell_1, 0)(\ell_2, 0)}^{(\ell_3, 0)}$, which vanishes whenever $\ell_1+\ell_2+\ell_3$ is odd. This removes all odd $\ell$-sum coupling paths, including antisymmetric interactions such as the $[\ell_1 = 1,\ell_2 = 1,\ell_3 = 1]$ vector cross-product path. As a result, scalar GTP can fail to distinguish configurations whose signal depends on pseudoscalar or chiral interactions. This limitation motivates the spin-weighted extension introduced below.

\subsection{Spin-weighted Spherical Harmonics for Restoring Expressivity}
\label{sec:swsh_gaunt}


Spin-Weighted Spherical Harmonics (SWSHs)~\citep{goldberg1967spin} generalize ordinary spherical harmonics by adding an integer spin weight $s$, alongside the degree $\ell$ and order $m$, with $|s| \leq \ell$. The case $s=0$ recovers ordinary spherical harmonics. For $s \neq 0$, SWSHs represent spin-weighted fields on $\mathrm{S}^2$ and transform by a phase under rotations of the local tangent frame. Additional details are provided in~\Cref{sec:swsh}.


The role of SWSHs in our tensor product comes from their generalized Gaunt integral. For three SWSHs with the spin weights satisfying $s_3=s_1+s_2$, the Gaunt coefficient is
\begin{equation}
\label{eq:generalized_gaunt}
\begin{aligned}
G^{(\ell_3,m_3,s_3)}_{(\ell_1,m_1,s_1)(\ell_2,m_2,s_2)} &=\int_{\mathrm{S}^2} {}_{s_{1}}Y_{\ell_{1}m_{1}} \, {}_{s_{2}}Y_{\ell_{2}m_{2}} \, {}_{s_{3}}Y_{\ell_{3}m_{3}}^{*} \, d\Omega \\
&= \sqrt{\frac{\prod_{i=1}^3 (2\ell_i+1)}{4\pi}} \begin{pmatrix} \ell_{1} & \ell_{2} & \ell_{3} \\ m_{1} & m_{2} & -m_{3} \end{pmatrix} \begin{pmatrix} \ell_{1} & \ell_{2} & \ell_{3} \\ s_{1} & s_{2} & -s_{3} \end{pmatrix}.
\end{aligned}
\end{equation}
Unlike the scalar factor in~\Cref{eq:gaunt_bridge}, the spin-weighted factor can be nonzero for odd $\ell_1+\ell_2+\ell_3$ when proper nonzero spin weights are present. SpinGTP uses this degree of freedom to recover antisymmetric paths that are absent from scalar GTP, while retaining the Gaunt product structure. A more formal statement is presented below and the proof is provided in~\cref{sec:swsh}.

\begin{proposition}{Spin-weighted path completion}{}
Let $(\ell_1,\ell_2,\ell_3)$ satisfy the Clebsch-Gordan triangle rule. Then there exist signed spin weights
\[
s_1\in[-\ell_1,\ell_1],\qquad
s_2\in[-\ell_2,\ell_2],\qquad
s_3\in[-\ell_3,\ell_3],
\]
with $s_3=s_1+s_2$, such that the signed-spin path $(\ell_1,s_1)\otimes(\ell_2,s_2)\to(\ell_3,s_3)$
is not identically zero.
\end{proposition}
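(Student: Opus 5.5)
Using the generalized Gaunt integral in \Cref{eq:generalized_gaunt}, the coupling $(\ell_1,s_1)\otimes(\ell_2,s_2)\to(\ell_3,s_3)$ is identically zero (as a map on all $m$-components) if and only if one of the two Wigner $3j$ factors vanishes identically. The $m$-factor $\begin{pmatrix}\ell_1&\ell_2&\ell_3\\ m_1&m_2&-m_3\end{pmatrix}$ is not identically zero whenever the triangle rule holds, because it is proportional to the Clebsch--Gordan coefficients. Those coefficients realize a nonzero intertwiner $V_{\ell_1}\otimes V_{\ell_2}\to V_{\ell_3}$, and such an intertwiner exists exactly under the triangle condition. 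The claim therefore reduces to a purely combinatorial statement: for every admissible triple $(\ell_1,\ell_2,\ell_3)$ we must exhibit integers $|s_i|\le \ell_i$ with $s_3=s_1+s_2$ such that
\[
\begin{pmatrix}\ell_1&\ell_2&\ell_3\\ s_1&s_2&-s_3\end{pmatrix}\neq 0 .
\]

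I split into two cases according to the parity of $\ell_1+\ell_2+\ell_3$. \textbf{Even case:} take $s_1=s_2=s_3=0$. The symbol $\begin{pmatrix}\ell_1&\ell_2&\ell_3\\0&0&0\end{pmatrix}$ has the classical closed form (Racah), which is nonzero precisely when the triangle rule holds and the sum $J=\ell_1+\ell_2+\ell_3$ is even. This is the scalar GTP path of \Cref{eq:gaunt_bridge}. \textbf{Odd case:} here every $\ell_i$ cannot be $0$. If, say, $\ell_1=0$, the triangle rule forces $\ell_2=\ell_3$, so $J$ is even. Hence $\ell_i\ge1$ for all $i$, and the choice $s_1=1$, $s_2=-1$, $s_3=0$ is admissible.

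The key claim is that $\begin{pmatrix}\ell_1&\ell_2&\ell_3\\1&-1&0\end{pmatrix}\neq0$ when $J$ is odd and the triangle rule holds. I would prove this by applying the standard recursion for $3j$ symbols (equivalently, the action of $J_\pm$ on the coupled state). Let $a=\begin{pmatrix}\ell_1&\ell_2&\ell_3\\1&-1&0\end{pmatrix}$ and $b=\begin{pmatrix}\ell_1&\ell_2&\ell_3\\-1&1&0\end{pmatrix}=(-1)^J a$. The lowering relation from $(0,0,0)$ expresses $\sqrt{\ell_3(\ell_3+1)}\begin{pmatrix}\ell_1&\ell_2&\ell_3\\0&0&0\end{pmatrix}$ as a combination $\sqrt{\ell_1(\ell_1+1)}\,\cdot(\text{a }(-1,0,\cdot)\text{ term})+\dots$. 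This involves other symbols, so I would instead use the known closed form
\[
\begin{pmatrix}\ell_1&\ell_2&\ell_3\\1&-1&0\end{pmatrix}\propto \begin{pmatrix}\ell_1&\ell_2&\ell_3\\0&0&0\end{pmatrix}\ \text{(even $J$)}\quad\text{or}\quad \propto \tfrac{\sqrt{(J+1)(J-2\ell_1)(J-2\ell_2)(J-2\ell_3)}}{\cdots}\ \text{(odd $J$)}.
\]
More concretely, I would invoke the explicit formula for $\begin{pmatrix}\ell_1&\ell_2&\ell_3\\1&-1&0\end{pmatrix}$ with $J$ odd (e.g.\ from Varshalovich or via the relation to the cross-product coupling $\hat{\mathbf r}\times\nabla$). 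That formula is a product of factorial square roots times $\sqrt{(J+1)(J-2\ell_1)(J-2\ell_2)(J-2\ell_3)}$ over a positive quantity. Each factor is strictly positive except at triangle equality $J=2\ell_i$, but $J$ odd rules out equality. This yields nonvanishing.

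The \textbf{main obstacle} is justifying this odd-$J$ closed form cleanly. As a fallback, I would argue by contradiction with a recursion. Suppose $a=0$. Then $b=0$ as well, and the $J_+$-recursion applied at $(0,0,0)$, combined with the odd-$J$ vanishing of the $(0,0,0)$ symbol, gives a linear relation among $(1,0,-1)$, $(0,1,-1)$ and $(1,-1,0)$-type symbols. Iterating the recursion would then force all symbols with $|s_i|\le1$ to vanish, and eventually the whole $s$-row would vanish. That contradicts the orthogonality relation $\sum_{s_1,s_2}(2\ell_3+1)\begin{pmatrix}\ell_1&\ell_2&\ell_3\\s_1&s_2&-s_3\end{pmatrix}^2=1$ under the triangle rule. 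This fallback at least gives the weaker statement that some choice of $s$ works, which is all the proposition requires; the orthogonality argument alone in fact suffices for existence.
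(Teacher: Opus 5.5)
Your proof is correct, but the step that actually closes it is the remark in your last sentence, and it should become the main argument. For any fixed $s_3$ with $|s_3|\le\ell_3$, orthogonality of the $3j$ symbols under the triangle rule gives
\[
\sum_{s_1+s_2=s_3}\begin{pmatrix}\ell_1&\ell_2&\ell_3\\ s_1&s_2&-s_3\end{pmatrix}^{2}=\frac{1}{2\ell_3+1}>0,
\]
so some admissible $(s_1,s_2)$ yields a nonzero spin factor. Your intertwiner remark (or the same orthogonality applied to the magnetic row) handles the $m$-factor, and no parity split is needed.

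The paper instead exhibits an explicit witness, the stretched configuration $s_1=m_1=\ell_1$, $s_2=m_2=\ell_3-\ell_1$, $s_3=m_3=\ell_3$. Its $3j$ symbol has magnitude $\bigl[(2\ell_1)!\,(2\ell_3)!/\bigl((\ell_1-\ell_2+\ell_3)!\,(\ell_1+\ell_2+\ell_3+1)!\bigr)\bigr]^{1/2}$, so the Gaunt coefficient is a positive multiple of $S^2$. That argument is constructive and uniform in the parity of $J=\ell_1+\ell_2+\ell_3$, but its witness needs spin weights as large as $\ell_1$ and $\ell_3$.

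Your parity split buys something the paper's proof does not: it shows that $|s_i|\le 1$ already reaches every triangle-admissible path. The odd case is exactly the kind of $(1,1)o\otimes(1,1)o\to(1,0)e$ path used in the OC20 spin head. This is the statement that matters for the bounded-spin setting of the runtime analysis and most of the experiments.

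To make that stronger version rigorous, you have to pin down the odd-$J$ formula you only gestured at. With $J=2g+1$ it reads
\[
\left|\begin{pmatrix}\ell_1&\ell_2&\ell_3\\ 1&-1&0\end{pmatrix}\right|
=\frac{\sqrt{(J+1)(J-2\ell_1)(J-2\ell_2)(J-2\ell_3)}}{\sqrt{\ell_1(\ell_1+1)\,\ell_2(\ell_2+1)}}\,
\frac{g!}{(g-\ell_1)!\,(g-\ell_2)!\,(g-\ell_3)!}
\left[\frac{(2g-2\ell_1)!\,(2g-2\ell_2)!\,(2g-2\ell_3)!}{(2g+1)!}\right]^{1/2}.
\]
It gives $1/\sqrt{6}$ at $(1,1,1)$ and $1/\sqrt{2(2\ell+1)}$ at $(1,\ell,\ell)$. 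It is nonzero because odd $J$ together with the triangle rule forces $\ell_i\ge1$ and $g\ge\ell_i$.

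Your recursion fallback, by contrast, is only a sketch. The passage from the vanishing of the symbols with $|s_i|\le1$ to the vanishing of the whole row is asserted rather than shown. Also, if it were completed, it would prove $a\neq0$ itself, not a weaker existence statement. Since orthogonality already settles existence, you can drop the fallback.
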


\textbf{Local Frames and Gauge Dependency.} Spin-weighted features depend on a local tangent-frame gauge. Unlike scalar spherical harmonics, an SWSH value ${}_sY_{\ell m}(\mathbf{x})$ is defined relative to a local orthonormal frame in the tangent plane at $\mathbf{x}$. If this frame is rotated by an angle $\psi$, a spin-$s$ field transforms as
\begin{equation}
    f_s(\mathbf{x}) \mapsto e^{is\psi} f_s(\mathbf{x}).
\end{equation}
This gauge dependence means that the basis functions implicitly capture the orientation of the spherical surface, yielding a richer representation than scalar spherical harmonics. However, spin-weighted features can be combined consistently only when their local frame conventions are compatible. In our implementation, frame construction is treated as part of the equivariant architecture. We use explicit geometric frames to evaluate SWSH edge features, with the details described in~\Cref{sec:equiv}.

\textbf{Best Asymptotic Runtime Complexity.} For bounded spin weight \(|s|_{\max}=1\), SpinGTP can attain the same \(O(L^4\log^2 L)\) asymptotic complexity as the VSTP~\cite{xie2026asymptotically}, provided that the FFT-based Gaunt tensor product~\cite{luo2024enabling} is used. A proof is given in~\Cref{sec:time_cost}.

\begin{figure}[t]
    \centering
    \includegraphics[width=0.8\textwidth, trim={2cm 4cm 2cm 3.5cm}, clip]{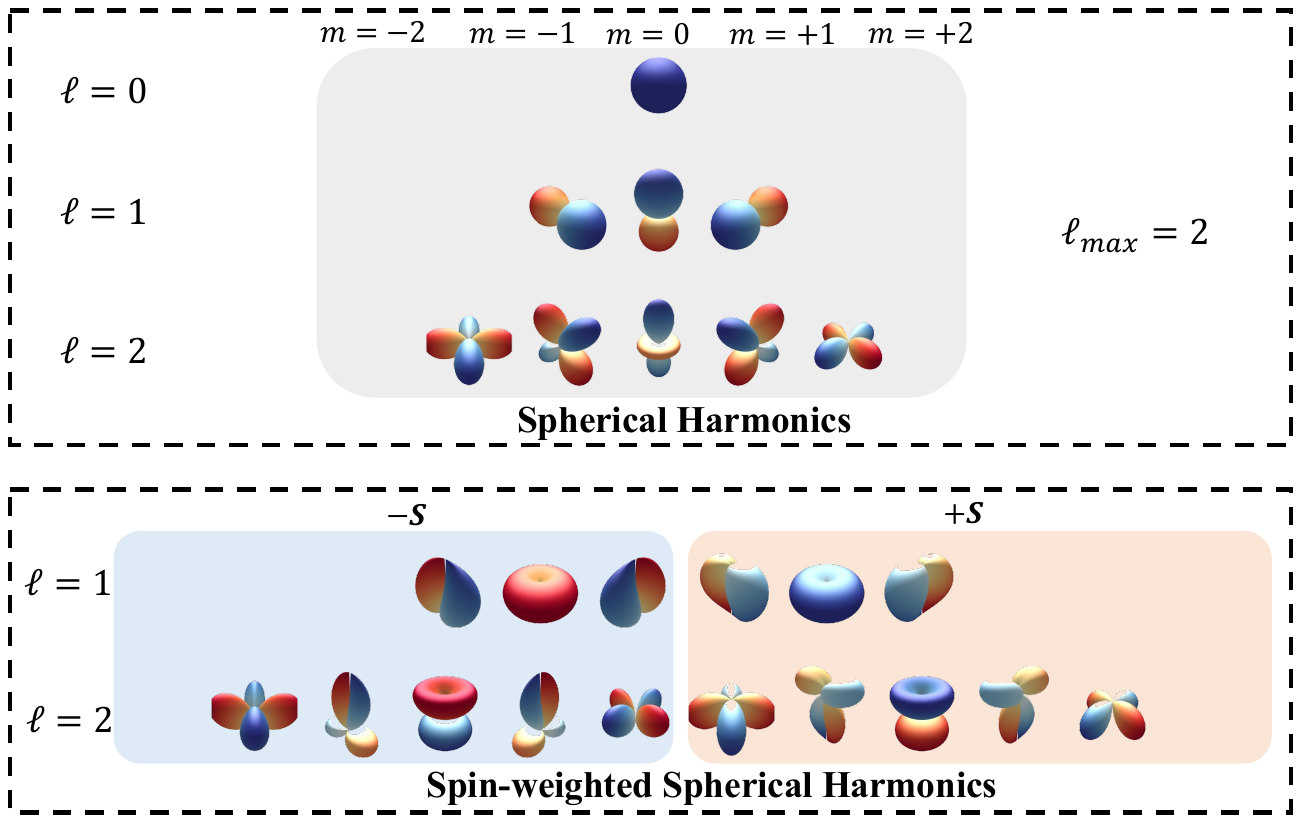}
    \caption{{Comparison between standard and spin-weighted spherical harmonics.} 
    (a) Scalar spherical harmonics with $\ell_{max}=2$. The rows represent the basis functions for $\ell=0, 1,$ and $2$, respectively, with the magnetic index $m$ ranging from $-\ell$ to $\ell$. 
    (b) Spin-weighted spherical harmonics with $\ell_{max}=2$ and absolute spin $|s|=1$. The rows display the basis functions for degrees $\ell=1$ and $\ell=2$. For each degree, the basis includes components for both $-s$ and $+s$ across full range of $m$.  
    }
    \label{fig:Figure 1}
\end{figure}

\subsection{Real-Basis Spin-Weighted Spherical Harmonics}
\label{sec:real_swsh_main}

SWSHs are naturally complex-valued, while our network operates on real-valued feature tensors to maintain compatibility with standard activation functions and optimize memory efficiency. For scalar harmonics, the usual real basis can be constructed within a fixed degree $\ell$. For spin-weighted harmonics, the construction must also account for the relation
\begin{equation}
\label{eq:swsh_conjugation}
{}_sY_{\ell m}^* = (-1)^{m+s} {}_{-s}Y_{\ell, -m}.
\end{equation}

Following~\citep{reisswig2013general}, we satisfy~\Cref{eq:swsh_conjugation} by using a \textbf{real SWSH basis} that couples the positive and negative spin weights as
\begin{equation}
\label{eq:real_swsh}
{}_sR_{\ell m} =
\begin{cases}
\frac{i}{\sqrt{2}}\left( {}_sY_{\ell, -m} - (-1)^{-m+s} {}_{-s}Y_{\ell m} \right) & \text{if } m < 0 \\
{}_sY_{\ell 0} & \text{if } m = 0 \\
\frac{1}{\sqrt{2}}\left( {}_sY_{\ell m} + (-1)^{m+s} {}_{-s}Y_{\ell, -m} \right) & \text{if } m > 0.
\end{cases}
\end{equation}
For any $|s|>0$, concatenating both ${}_sY_{\ell m}$ and ${}_{-s}Y_{\ell m}$ increases the basis dimension from $2\ell+1$ to $2(2\ell+1)$ because both signed spin sectors must be retained for real SWSH construction. This doubled representation provides the real spin-weighted features used by the network. A visualized comparison between the real scalar spherical harmonics and the real basis SWSH is provided in~\Cref{fig:Figure 1}.


\paragraph{Transformation via the $\mathbf{Q}$ Matrix.}

To connect real feature vectors $\mathbf{a}_{\text{real}}$ with the complex Gaunt coefficients used in the tensor product, we use a unitary change of basis $\mathbf{Q}$ such that
\[
\mathbf{a}_{\text{complex}}=\mathbf{Q}\mathbf{a}_{\text{real}} .
\]
When $|s|=0$, this is the standard real-to-complex spherical harmonic transformation. For a fixed $\ell$ and $|s|>0$, the matrix satisfies $\mathbf{Q}\in\mathbb{C}^{2(2\ell+1)\times 2(2\ell+1)}$ and is organized into four blocks that couple the $\pm s$ and $\pm m$ indices. The explicit block-wise definitions of $\mathbf{Q}$ and the corresponding real-form Gaunt coefficients are given in~\Cref{sec:real_swsh}.

\subsection{Parity-Equivariant Spin-Weighted Spherical Harmonics}
\label{sec:parity_swsh_main}


An $\mathrm{E}(3)$-equivariant model must distinguish signals with different parities. Standard scalar spherical harmonics have a fixed parity $P=(-1)^\ell$. For spin-weighted harmonics, spatial inversion also flips the spin weight such that
\[
{}_sY_{\ell m}(-\mathbf{x}) = (-1)^\ell {}_{-s}Y_{\ell m}(\mathbf{x}).
\]
Thus inversion maps the $s$ sector to the $-s$ sector. As a result, the real basis functions ${}_sR_{\ell m}$ defined in~\Cref{eq:real_swsh} do not by themselves carry a fixed parity when $s\neq 0$. 
To obtain basis functions with an inversion parity $p \in \{+1,-1\}$, we combine the paired spin sectors and define a \textbf{parity-equivariant real SWSH basis}, denoted by $\mathcal{R}_{p}^{s,\ell,m}(\mathbf{x})$. This construction separates the parity-even and parity-odd components as
\begin{equation}
\label{eq:parity_basis}
\mathcal{R}_{p}^{s,\ell,m}(\mathbf{x}) = \frac{1}{\sqrt{2}} \left( {}_{-s}R_{\ell m}(\mathbf{x}) + p(-1)^\ell \, {}_{s}R_{\ell m}(\mathbf{x}) \right).
\end{equation}
By construction, this basis satisfies the required parity constraint $\mathcal{R}_{p}^{s,\ell,m}(-\mathbf{x}) = p \mathcal{R}_{p}^{s,\ell,m}(\mathbf{x})$. 
This allows the representation to distinguish scalar from pseudoscalar channels and polar vector from axial vector channels. A detailed derivation is provided in~\Cref{sec:parity_derivation}.


The parity-labeled real basis has two roles. First, it lets the network track the inversion parity of each irrep throughout tensor product layers. Second, it provides the necessary geometric signals for physical properties whose sign changes under reflection, including pseudoscalar quantities associated with chiral geometry.



\subsection{High-Performance Implementation of SpinGTP}
\label{sec:swsh_tp_impl}


The computational core of SpinGTP is an instruction-based tensor product over SWSH irreps. Each irrep is indexed by $(\ell, |s|, p)$. Given two input feature tensors $\mathbf{x}^{(1)}$ and $\mathbf{x}^{(2)}$, the weighted tensor product maps pairs of input irreps to admissible output irreps as
{
\small
\begin{equation}
\label{eq:spin_gtp}
[\mathbf{x}^{(1)} \otimes_{\mathbf{W}} \mathbf{x}^{(2)}]_{\ell_3,s_3,p_3}^{(m_3, w)} = \sum_{u, v} \mathbf{W}_{uvw} \sum_{m_1, m_2} G_{(\ell_1, m_1, s_1), (\ell_2, m_2, s_2)}^{(\ell_3, m_3, s_3)} \mathbf x_{u, \ell_1, m_1, p_1}^{(1)} \mathbf x_{v, \ell_2, m_2, p_2}^{(2)},\quad p_3=p_1p_2,
\end{equation}
}
where $G$ denotes the real SWSH Gaunt contraction and $\mathbf{W}$ mixes feature multiplicities. Internally, the contraction enumerates the signed spin sectors $s=\pm |s|$, applies the selection rule $s_3=s_1+s_2$, combines the paired output spin sectors according to the output parity. To balance expressivity with parameter efficiency, we implement \textit{fully connected}, \textit{depthwise}, and \textit{element-wise} tensor products. The implementation details are shown in~\Cref{sec:tp_impl}.

\textbf{Pre-contracted Kernel Execution.}
A direct evaluation of SpinGTP involves many small sparse contractions, which is inefficient on GPUs. Let B denote the number of edges, and let U and V denote the input multiplicities. When computing over all $m_3$, this contraction has time complexity $O(BUV\ell_1 \ell_2\ell_3)$. In molecular message passing, the second input $\mathbf{x}^{(2)}$ is often an edge feature with multiplicity $V=1$. We exploit this structure by precontracting the tensor product kernel with $\mathbf{x}^{(2)}$ before applying it to the node channels.

Based on~\Cref{eq:spin_gtp}, when the edge multiplicity is one, the sum over $v$ disappears and we first form an edge-dependent kernel
{
\small
\begin{equation}
\mathbf{K}_{\mathrm{pre}}^{(\ell_3,s_3,p_3)}(m_1,m_3)
=
\sum_{m_2}
G_{(\ell_1,m_1,s_1),(\ell_2,m_2,s_2)}^{(\ell_3,m_3,s_3)}
\mathbf x_{\ell_2,m_2,p_2}^{(2)} .
\end{equation}
}
The remaining contraction becomes
{
\small
\begin{equation}
[\mathbf{x}^{(1)} \otimes_{\mathbf{W}} \mathbf{x}^{(2)}]_{\ell_3,s_3,p_3}^{(m_3, w)}
=
\sum_u
\mathbf{W}_{u1w}
\sum_{m_1}
\mathbf{K}_{\mathrm{pre}}^{(\ell_3,s_3,p_3)}(m_1,m_3)
\mathbf x_{u,\ell_1,m_1,p_1}^{(1)} .
\end{equation}
}
This reduces repeated geometric contractions across node channels, and gives a time complexity $O(B\ell_1\ell_2\ell_3 + BU\ell_1\ell_3)$ when computing over all $m_3$. In implementation, the precontracted kernel is evaluated as a dense batched matrix multiplication, followed by the learned multiplicity mixing and scatter into the output irreps. The time ablation of this implementation is provided in~\Cref{sec:runtime_comparison}.

\subsection{Specialized SWSH Equivariant Layers}
\label{sec:swsh_layers}

The SWSH representation requires standard neural network layers to respect the irrep structure indexed by $(\ell, |s|, p)$. We use SWSH linear and normalization layers that mix only compatible feature channels and preserve the type of each irrep.

\textbf{Equivariant Linear Mixing.} The linear layer performs channel mixing within each SWSH irrep. Since features are grouped by $(\ell, |s|, p)$, the linear map is block diagonal across irrep types such that
\begin{equation}
    [\mathrm{Linear}(\mathbf{x})]_{\ell,s,p}
    =
    \mathbf{W}_{\ell,s,p}\mathbf{x}_{\ell,s,p}
    +
    \mathbf{b}\,\delta_{\ell,0}\delta_{s,0}\delta_{p,1}.
\end{equation}
Here $\mathbf{W}_{\ell,s,p}$ mixes multiplicities within the same irrep, while the bias is applied only to invariant scalar channels. In practice, identical irrep blocks are grouped in memory so that the blockwise mixing can be evaluated efficiently with batched matrix multiplication.

\textbf{Equivariant Layer Normalization.} Following~\citep{equiformer}, the normalization layer treats scalar and non-scalar irreps separately. For scalar blocks with $(\ell,s)=(0,0)$, we subtract the mean over multiplicities before normalization. For all other blocks, no centering is applied. Each block is then normalized by an invariant RMS computed over multiplicities and irrep components. Learnable gains scale each multiplicity channel and are shared across irrep components. Learnable biases are restricted to scalar channels.

\section{Experiments}
To evaluate the expressivity of the SWSH tensor product, we benchmark our framework across four datasets covering five distinct tasks, ranging from synthetic geometric classification to large-scale atomistic prediction. We first use Chiral Tetris Classification to test whether spin-weighted channels can distinguish enantiomers that scalar Gaunt-based methods cannot separate. We then evaluate energy and force prediction on the 3BPA dataset, which tests generalization across thermally sampled conformations. To assess chiral molecular geometries more directly, we evaluate chirality classification and energy-force prediction on a chiral subset of the SPICE-MACE-OFF dataset. Finally, we integrate the SWSH tensor product into the Equiformer architecture and benchmark it on the Open Catalyst  (OC20) IS2RE dataset. Details of datasets, implementations, and comparisons are provided below. Furthermore, to evaluate efficiency, we show the time ablation of our implementations in~\Cref{sec:runtime_comparison}.

\subsection{Chiral Tetris Classification}

\begin{wrapfigure}[23]{r}{0.5\textwidth}
    \centering\vspace{-0.6cm}
    \vspace{-0.2in}
    \includegraphics[width=0.5\textwidth]{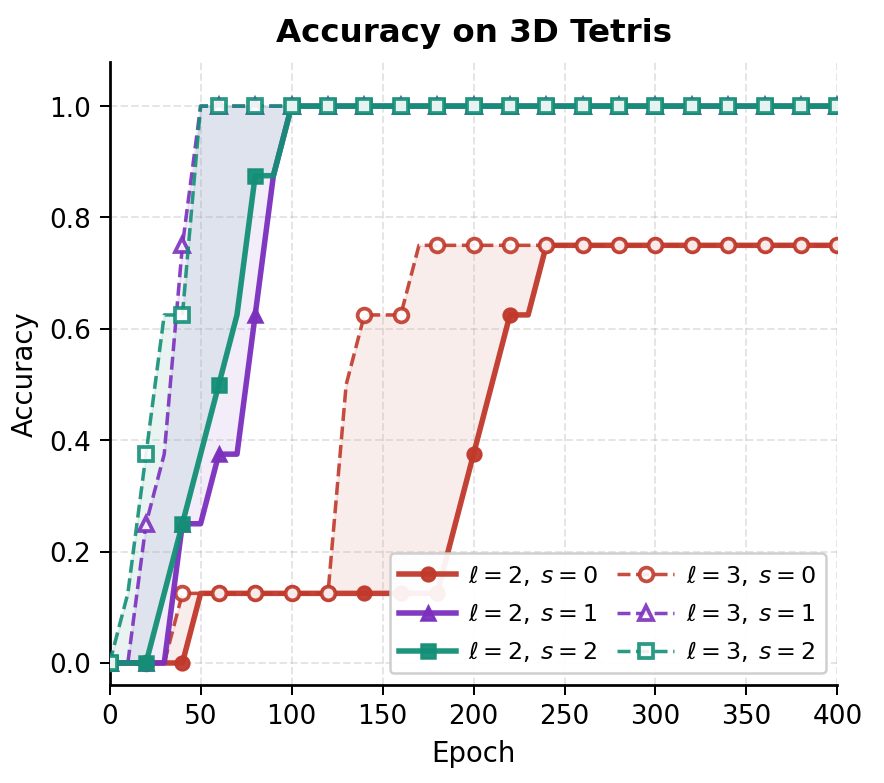}\vspace{-0.4cm}
    \caption{\textbf{3D Tetris classification.} Scalar GTP ($s=0$) plateaus at 75\%, consistent with its failure to distinguish the chiral mirror pair. In contrast, the SpinGTP with nonzero spin channels reaches 100\% accuracy and separates the two enantiomeric pieces, showing that spin-weighted coupling provides the missing odd $\ell$-sum interaction.}
    \label{fig:tetris}
\end{wrapfigure}

\textbf{Dataset}. We first use a minimal synthetic benchmark to test whether the model can represent chiral distinctions. Following~\citep{xie2024price,e3nn_software}, the task is to classify eight 3D Tetris-like shapes built from four unit cubes, where the first two classes are non-superimposable mirror images. Each shape is represented as a graph whose nodes are the cube centers, with edges connecting immediately adjacent cubes, and the inputs are presented under random 3D rotations. This makes the benchmark a direct probe of parity-sensitive expressivity rather than a generic shape-classification task.

\textbf{Training Details}. We train an equivariant network with four SpinGTP convolution layers, scalar node inputs, and SWSH edge features on radius-graph edges with cutoff \(1.5\). The model outputs one odd scalar channel for the mirror pair and six even scalar channels for the other classes, and is trained for 500 Adam steps with fresh random rotations, learning rate \(10^{-3}\), and MSE loss. 

\textbf{Results}. The key question is whether the model can separate the mirror pair. \citep{xie2024price} showed that CGTP-based networks solve this task, while scalar GTP fails because it lacks the required antisymmetric interaction. In our experiments, SpinGTP reaches 100\% accuracy and correctly distinguishes the two chiral pieces. This shows that the spin-weighted construction provides the missing odd $\ell$-sum interaction that scalar GTP cannot express. The Tetris results are shown in Figure~\ref{fig:tetris}.

\subsection{3BPA Performance}

\textbf{Dataset.} The 3BPA dataset~\citep{3BPA} consists of molecular dynamics trajectories of 3-(benzyloxy)pyridin-2-amine, a flexible drug-like molecule with three rotatable bonds that give rise to a complex torsional potential energy surface with many local minima. Following~\citep{batatia2022mace, batatia2025design, luo2024enabling}, we train on 500 geometries sampled at 300\,K. The benchmark evaluates both in-distribution generalization at 300\,K and out-of-distribution robustness at 600\,K and 1200\,K. A dihedral test set further probes the torsional potential energy surface by scanning one dihedral angle while holding the other two fixed, directly testing whether the model correctly resolves conformer transition barriers.

\textbf{Training Details.} We adopt MACE~\citep{batatia2022mace} as our base
architecture, extended with SWSH-based tensor products, which we refer to as SpinGTP. Training follows a two-stage scheme, with the second stage fine-tuning at an increased relative energy weight. Full details are provided in~\Cref{sec:3BPA training}.

\textbf{Results.}
We compare SpinGTP against Allegro~\citep{musaelian2023learning},
NequIP~\citep{nequip}, BOTNet~\citep{batatia2025design},
MACE~\citep{batatia2022mace}, and MACE-Gaunt~\citep{luo2024enabling} in Table~\ref{tab:3bpa_results}. SpinGTP consistently matches or outperforms current state-of-the-art methods across all test splits. In particular, SpinGTP maintains competitive in-distribution performance at 300\,K while showing stronger generalization on the out-of-distribution splits at 600\,K, 1200\,K, and the dihedral torsion test set. We additionally observe that MACE-Gaunt, despite sharing the same Gaunt-based contraction scheme, underperforms SpinGTP across all splits, with the largest gap on the dihedral torsion test set. This is consistent with our hypothesis that the antisymmetric spin-paths  restored by the SWSH tensor product contribute to resolving asymmetric torsional interactions, which are absent in the standard Gaunt contraction used by MACE-Gaunt.

\begin{table*}[t]
\centering
\caption{Results on the 3BPA dataset. Root mean square error of Energy ($E$) in meV and Forces ($F$) in meV/\AA\ across three temperature regimes and a dihedral scan. Standard deviations over 3 random seeds are in parentheses. \textbf{Bold} indicates best performance, \underline{underline} indicates second best.}
\label{tab:3bpa_results}
\vspace{0.1in}
\begin{small}
\begin{tabular}{llcccccc}
\toprule
& & \textbf{Allegro} & \textbf{NequIP} & \textbf{BOTNet} & \textbf{MACE} & \textbf{MACE-Gaunt} & \textbf{SpinGTP} \\
\midrule
\multirow{2}{*}{300 K}
 & $E$ & 3.84 (0.08)  & 3.3 (0.1)  & 3.1 (0.13) & 3.0 (0.2)  & \textbf{2.9 (0.1)}  & \textbf{2.9 (0.1)} \\
 & $F$ & 12.98 (0.17) & 10.8 (0.2) & 11.0 (0.14)& \textbf{8.8 (0.3)}  & 9.2 (0.1)  & \underline{9.0 (0.3)} \\
\midrule
\multirow{2}{*}{600 K}
 & $E$ & 12.07 (0.45) & 11.2 (0.1) & 11.5 (0.6) & \underline{9.7 (0.5)}  & 10.6 (0.5) & \textbf{9.6 (0.3)} \\
 & $F$ & 29.17 (0.22) & 26.4 (0.1) & 26.7 (0.29)& \underline{21.8 (0.6)} & 22.2 (0.2) & \textbf{21.0 (0.5)} \\
\midrule
\multirow{2}{*}{1200 K}
 & $E$ & 42.57 (1.46) & 38.5 (1.6) & 39.1 (1.1) & \underline{29.8 (1.0)} & 30.4 (1.2) & \textbf{29.0 (1.1)} \\
 & $F$ & 82.96 (1.77) & 76.2 (1.1) & 81.1 (1.5) & \underline{62.0 (0.7)} & 63.1 (1.2) & \textbf{61.4 (3.2)} \\
\midrule
\multirow{2}{*}{Dihedral}
 & $E$ & --           & --         & 16.3 (1.5) & \textbf{7.8 (0.6)}  & 9.9 (0.3)  & \underline{8.9 (2.5)} \\
 & $F$ & --           & --         & 20.0 (1.2) & 16.5 (1.7) & 17.7 (1.1) & \textbf{15.3 (0.7)} \\
\midrule
\end{tabular}
\end{small}
\vspace{-10pt}
\end{table*}

\subsection{SPICE-MACE-OFF Chiral Subset Performance}

\textbf{Dataset.} The SPICE-MACE-OFF dataset~\citep{kovacs2025mace} is a large-scale benchmark comprising over 950,000 unique configurations, including small molecules from PubChem, DES370K dimers, and biological systems. To evaluate the specific impact of the SWSH tensor product on chiral expressivity, we curated a chiral subset by screening the original dataset using RDKit~\cite{rdkit} to identify enantiomeric pairs and chiral centers via SMILES sequences. This allows us to probe whether the addition of antisymmetric spin-paths
translates to improved performance on geometrically sensitive tasks.

\textbf{Experimental Settings.} We evaluate our framework on two distinct categories of tasks. First, we perform chirality classification by modifying the regression heads of  \cite{nequip}, MACE \cite{batatia2022mace} and MACE-Gaunt \cite{luo2024enabling} to distinguish between R- and S-chirality. This includes both classification using mixed R/S training data and a parity generalization test, where the model is trained exclusively on R-chirality and evaluated on unseen S-chirality structures. Second, we assess Machine Learning Interatomic Potentials (MLIP) through standard energy and force prediction across four chiral sub-domains: PubChem, Amino Acids, DES370, and Dipeptides.

\begin{table*}[t]
\centering
\caption{Chirality classification (R vs.\ S, both in training). Validation accuracy (\%) at selected epochs. Values are reported as mean (std) over 3 runs. Convergence: epochs to first reach 98\% validation accuracy (lower is better), also mean (std) over 3 runs.}
\label{tab:chirality_classification}
\vspace{0.1in}
\begin{small}
\begin{tabular}{lccccc}
\toprule
Model & Epoch 1 $\uparrow$ & Epoch 10 $\uparrow$ & Epoch 20 $\uparrow$ & Best (\%) $\uparrow$ & Convergence $\downarrow$ \\
\midrule
MACE-Gaunt & 64.9 (0.5) & 95.5 (0.3) & 95.8 (0.3) & 98.30 (0.08) & 24.3 (1.0) \\
MACE & 67.8 (0.4) & 96.8 (0.3) & 97.3 (0.2) & 98.34 (0.07) & 18.2 (0.8) \\
NequIP & 68.1 (0.4) & 97.0 (0.2) & 97.6 (0.2) & 98.43 (0.07) & 16.1 (0.6) \\
\textbf{SpinGTP} & 67.4 (0.3) & \textbf{97.5 (0.2)} & \textbf{98.4 (0.1)} & \textbf{98.79 (0.05)} & \textbf{10.9 (0.5)} \\
\bottomrule
\end{tabular}
\end{small}
\vspace{-10pt}
\end{table*}

\textbf{Chirality Classification and Parity Generalization.}

Results for the chirality classification tasks are summarized in Tables~\ref{tab:chirality_classification}. In the standard classification task, SpinGTP achieves the fastest convergence, reaching 98\% validation accuracy in nearly half the epochs required by MACE-Gaunt ($10.9 \pm 0.5$ vs. $24.3 \pm 1.0$). The parity generalization test in Table~\ref{tab:chirality_parity} reveals a critical failure mode in the original Gaunt tensor product (MACE-Gaunt), which fails entirely ($0\%$ accuracy) to generalize to unseen S-chirality. This stems from the lack of odd-parity paths in standard Gaunt coefficients. In contrast, SpinGTP not only enables this generalization by encoding proper reflection symmetry but also outperforms Clebsch-Gordan (CG) based models, achieving a superior $94.1\%$ accuracy on unseen mirror structures.

\begin{wraptable}{r}{0.55\textwidth}
\vspace{0.15in}
\vspace{-\intextsep}
\centering
\caption{Chirality parity generalization: train on R-only, evaluate on R-only and S-only. acc\_S tests generalization to mirror structures never seen in training. Values are mean (std) over 3 runs.}
\label{tab:chirality_parity}
\vspace{0.05in}
\begin{small}
\begin{tabular}{lccc}
\toprule
Model & acc\_R (\%) & acc\_S (\%) & Overall (\%) \\
\midrule
MACE-Gaunt       & 100.0 (0.0) & 0.0 (0.0)  & 46.3 (0.0) \\
MACE             & 100.0 (0.0) & 88.6 (1.2) & 93.9 (0.7) \\
NequIP           & 100.0 (0.0) & 90.8 (1.0) & 95.0 (0.6) \\
\textbf{SpinGTP} & \textbf{100.0 (0.0)} & \textbf{94.1 (0.8)} & \textbf{97.2 (0.4)} \\
\bottomrule
\end{tabular}
\end{small}
\vspace{0.1in}
\vspace{-\intextsep}
\end{wraptable}

\textbf{Energy and Force Prediction.}
Table~\ref{tab:chiral_mlip} demonstrates that the completeness of the SWSH tensor product translates to higher accuracy in MLIP tasks. Compared to MACE-Gaunt, our model provides a reduction in both energy and force errors across most of the chiral subsets. Notably, SWSH achieves state-of-the-art performance in the Chiral DES370 and Dipeptides subsets, where sensitivity to complex torsional and chiral interactions is paramount. Even when compared to the computationally heavier Clebsch-Gordan implementations in NequIP and MACE, SpinGTP remains highly competitive, consistently achieving the lowest errors.

\textbf{Training Setup.}
Detailed specifications regarding architecture, hyperparameters, and training protocols are provided in~\Cref{sec:spice training}.


\begin{table*}[htbp]
\centering
\caption{{Results on SPICE-MACE-OFF Chiral Subset.} Test set MAE for models trained and evaluated exclusively on SPICE Chiral sub-datasets. Energy ($E$) in meV/atom, Forces ($F$) in meV/\AA. The best results are shown in \textbf{bold} and the second best results are shown with \underline{underlines}.}
\label{tab:chiral_mlip}
\vspace{0.1in}
\begin{small}
\begin{tabular}{lcccccccc}
\toprule
& \multicolumn{2}{c}{\textbf{NequIP}} & \multicolumn{2}{c}{\textbf{MACE}} & \multicolumn{2}{c}{\textbf{MACE-Gaunt}} & \multicolumn{2}{c}{\textbf{SpinGTP}} \\
\cmidrule(lr){2-3} \cmidrule(lr){4-5} \cmidrule(lr){6-7} \cmidrule(lr){8-9}
\textbf{Chiral Subset} & $E$ & $F$ & $E$ & $F$ & $E$ & $F$ & $E$ & $F$ \\
\midrule
Chiral PubChem     & 6.1 & 21.0 & \underline{5.0} & \textbf{16.0} & \textbf{4.9} & 19.3 & 5.7 & \underline{17.6} \\
Chiral Amino Acids & \textbf{6.5} & 20.5 & 7.5 & \textbf{19.3} & \underline{6.6} & 22.9 & \underline{6.6} & \underline{19.9} \\
Chiral DES370      & 3.3 & 8.4  & \underline{2.6} & \underline{6.6} & 2.9 & 8.1  & \textbf{2.4} & \textbf{6.3} \\
Chiral Dipeptides  & 4.0 & 12.0 & 4.2 & \underline{9.7} & \underline{3.5} & 11.8 & \textbf{3.2} & \textbf{9.4} \\
\bottomrule
\end{tabular}
\end{small}
\end{table*}

\subsection{OC20 IS2RE Direct}


\textbf{Dataset}. We test whether the SpinGTP implementation remains compatible with large-scale atomistic training. We use the Open Catalyst 2020 (OC20) initial-structure-to-relaxed-energy (IS2RE) task~\citep{chanussot2021open}, where each system contains an adsorbate on a catalyst slab and the goal is to predict the relaxed energy from the initial structure. These graphs are substantially larger and chemically more diverse than 3BPA and SPICE. Following standard OC20 reporting, validation sets are divided into four sub-splits, ID, OOD-Ads, OOD-Cat, and OOD-Both. The direct IS2RE split contains 460k training structures and 100k structures for validation.

\textbf{Training Details}. For the main comparison, we use the direct IS2RE setting without the IS2RS node-level auxiliary task, following the comparison protocol used in Equiformer. Our SpinGTP follows the Equiformer attention backbone and replaces its equivariant tensor-product, linear, and normalization blocks with SWSH counterparts. We use a six-layer SWSH-Equiformer with eight attention heads, cutoff radius \(5.0\), 128 radial basis functions, on-the-fly periodic graphs, and a maximum of 500 neighbors. Additional architecture and optimization details are provided in~\Cref{sec:OC20 training}.

\textbf{Results}. \Cref{tab:is2re} compares our model against direct IS2RE baselines, including SchNet, DimeNet++, GemNet-dT, SphereNet, Equiformer, and EquiformerV2. Our model achieves performance comparable to Equiformer, which uses Clebsch-Gordan tensor products. In particular, SpinGTP improves the ID split, suggesting that the spin-weighted tensor product retains strong expressive power while remaining compatible with large-scale OC20 training.

\begin{table}[t]
\centering
\caption{Comparison of model performance on energy predictions for OC20 \textsc{IS2RE-Direct} validation set without noisy-node auxiliary loss. Our model is trained to compare against several baseline methods, including SchNet~\citep{SchNet}, DimeNet++~\citep{DimeNet++},
GemNet-dT~\citep{GemNet-dT},
SphereNet~\citep{SphereNet}, 
ComENet~\citep{comenet}, Equiformer~\citep{equiformer} and EquiformerV2~\citep{equiformerv2}. The best results are shown in \textbf{bold} and the second best results are shown with \underline{underlines}.}
    \vspace{0.01in}
    \label{tab:is2re}
\begin{sc}
\resizebox{\textwidth}{!}{
\begin{tabular}{lccccc|ccccc}
\toprule[1.2pt]
 & \multicolumn{5}{c|}{Energy MAE ({\normalfont eV}) $\downarrow$} & \multicolumn{5}{c}{EwT (\%) $\uparrow$} \\ 
\cmidrule[0.6pt]{2-11}
Model & ID & OOD Ads & OOD Cat & OOD Both & Average & ID & OOD Ads & OOD Cat & OOD Both & Average \\
\midrule[1.2pt]
SchNet & 0.6465 & 0.7074 & 0.6475 & 0.6626 & 0.6660 & 2.96 & 2.22 & 3.03 & 2.38 & 2.65 \\
DimeNet++ & 0.5636 & 0.7127 & 0.5612 & 0.6492 & 0.6217 & 4.25 & 2.48 & 4.40 & 2.56 & 3.42 \\
GemNet-dT & 0.5561 & 0.7342 & 0.5659 & 0.6964 & 0.6382 & 4.51 & 2.24 & 4.37 & 2.38 & 3.38 \\
SphereNet & 0.5632 & 0.6682 & 0.5590 & 0.6190 & 0.6024 & 4.56 & \underline{2.70} & 4.59 & 2.70 & 3.64\\
Equiformer & \underline{0.5088} & \textbf{0.6271} & \textbf{0.5051} & \textbf{0.5545} & \textbf{0.5489} & \underline{4.88} & \textbf{2.93} & 4.92 & \textbf{2.98} & \textbf{3.93} \\
EquiformerV2 & 0.5161 &  0.7041 & 0.5245 & 0.6365 & 0.5953 & - & - & - & - & - \\
SpinGTP & \textbf{0.5066} & \underline{0.6510} & \underline{0.5130} & \underline{0.5887} & \underline{0.5648} & \textbf{5.07} & 2.58 & \textbf{5.06} & \underline{2.85} & \underline{3.89} \\
\bottomrule[1.2pt]
\end{tabular}
}
\end{sc}
\vspace{-0.2in}
\end{table}


\section{Limitations and Summary}
\label{sec:summary}

{\bf{Limitations}}. One limitation of SpinGTP is that nonzero spin-weighted features require a consistent gauge. Without compatible frame choices, aggregating spin features can violate equivariance. We address this through geometry-dependent frame construction with a shared convention across nonzero spin irreps (see~\Cref{sec:equiv}), though ambiguity remains for highly symmetric point clouds. Another limitation is that we did not observe a clear acceleration from using the spherical transform, possibly because the multiplicity is large relative to the small $L$. Resolving these limitations is a direction for future work.

{\bf{Summary}}. In this work, we introduced a novel equivariant framework based on Spin-Weighted Spherical Harmonics (SWSH), bridging the gap between the computational efficiency of Gaunt-based tensor products and the mathematical completeness of the Clebsch-Gordan Tensor Product. By leveraging the spin-weight degree of freedom, we recover the antisymmetric paths essential for resolving chiral geometries. Theoretically, we demonstrate that the SWSH basis captures odd-parity interactions through specialized spin-selection rules. Architecturally, we proposed equivariant SWSH linear and normalization layers that enforce strict symmetry constraints while optimizing for hardware throughput. Empirically, our framework demonstrates superior expressivity across diverse benchmarks. These results suggest that SWSH-based equivariant networks offer a robust and scalable solution for high-fidelity geometric modeling in molecular science and beyond.

\section*{Acknowledgments}

This work was supported in part by the National Science Foundation under Grants IIS-2243850, CNS-2328395, and MOMS-2331036; the National Institutes of Health under Grant U01AG070112; the Texas A\&M University Division of Research Targeted Proposal Teams Funding Program; and the Texas A\&M Institute of Data Science Thematic Labs Program.

\bibliographystyle{refstyle}
\bibliography{ref}

@article{reisswig2013general,
  title={General relativistic null-cone evolutions with a high-order scheme},
  author={Reisswig, Christian and Bishop, Nigel and Pollney, Denis},
  journal={General Relativity and Gravitation},
  volume={45},
  number={5},
  pages={1069--1094},
  year={2013},
  publisher={Springer}
}

@article{healy2003ffts,
  title={FFTs for the 2-sphere-improvements and variations},
  author={Healy Jr, Dennis M and Rockmore, Daniel N and Kostelec, Peter J and Moore, Sean},
  journal={Journal of Fourier analysis and applications},
  volume={9},
  number={4},
  pages={341--385},
  year={2003},
  publisher={Springer}
}

@article{kovacs2025mace,
  title={Mace-off: Short-range transferable machine learning force fields for organic molecules},
  author={Kov{\'a}cs, D{\'a}vid P{\'e}ter and Moore, J Harry and Browning, Nicholas J and Batatia, Ilyes and Horton, Joshua T and Pu, Yixuan and Kapil, Venkat and Witt, William C and Magdau, Ioan-Bogdan and Cole, Daniel J and others},
  journal={Journal of the American Chemical Society},
  volume={147},
  number={21},
  pages={17598--17611},
  year={2025},
  publisher={ACS Publications}
}

@article{batatia2025design,
  title={The design space of E (3)-equivariant atom-centred interatomic potentials},
  author={Batatia, Ilyes and Batzner, Simon and Kov{\'a}cs, D{\'a}vid P{\'e}ter and Musaelian, Albert and Simm, Gregor NC and Drautz, Ralf and Ortner, Christoph and Kozinsky, Boris and Cs{\'a}nyi, G{\'a}bor},
  journal={Nature Machine Intelligence},
  volume={7},
  number={1},
  pages={56},
  year={2025}
}

@article{batatia2022mace,
  title={MACE: Higher order equivariant message passing neural networks for fast and accurate force fields},
  author={Batatia, Ilyes and Kovacs, David P and Simm, Gregor and Ortner, Christoph and Cs{\'a}nyi, G{\'a}bor},
  journal={Advances in neural information processing systems},
  volume={35},
  pages={11423--11436},
  year={2022}
}

@article{3BPA,
    author = {Kovács, Dávid P{\'e}ter and Oord, Cas van der and Kucera, Jiri and Allen, Alice E. A. and Cole, Daniel J. and Ortner, Christoph and Csányi, Gábor},
    title = {Linear Atomic Cluster Expansion Force Fields for Organic Molecules: Beyond RMSE},
    journal = {Journal of Chemical Theory and Computation},
    volume = {17},
    number = {12},
    pages = {7696-7711},
    year = {2021},
    doi = {10.1021/acs.jctc.1c00647},
        note ={PMID: 34735161},
    URL = { 
        https://doi.org/10.1021/acs.jctc.1c00647
    },
    eprint = { 
        https://doi.org/10.1021/acs.jctc.1c00647
    }
}

@inproceedings{luo2024enabling,
  title={Enabling Efficient Equivariant Operations in the Fourier Basis via Gaunt Tensor Products},
  author={Luo, Shengjie and Chen, Tianlang and Krishnapriyan, Aditi S},
  booktitle={The Twelfth International Conference on Learning Representations},
  year={2024}
}

@article{musaelian2023learning,
  title={Learning local equivariant representations for large-scale atomistic dynamics},
  author={Musaelian, Albert and Batzner, Simon and Johansson, Anders and Sun, Lixin and Owen, Cameron J and Kornbluth, Mordechai and Kozinsky, Boris},
  journal={Nature Communications},
  volume={14},
  number={1},
  pages={579},
  year={2023},
  publisher={Nature Publishing Group UK London}
}

@article{goldberg1967spin,
  title={Spin-s Spherical Harmonics and {\dh}},
  author={Goldberg, Joshua N and MacFarlane, Alan J and Newman, Ezra T and Rohrlich, Fritz and Sudarshan, EC George},
  journal={Journal of Mathematical Physics},
  volume={8},
  number={11},
  pages={2155--2161},
  year={1967},
  publisher={AIP Publishing}
}

@article{xie2026asymptotically,
  title={Asymptotically Fast Clebsch-Gordan Tensor Products with Vector Spherical Harmonics},
  author={Xie, YuQing and Daigavane, Ameya and Kotak, Mit and Smidt, Tess},
  journal={arXiv preprint arXiv:2602.21466},
  year={2026}
}

@inproceedings{xie2024price,
  title={The Price of Freedom: Exploring Expressivity and Runtime Tradeoffs in Equivariant Tensor Products},
  author={Xie, Yuqing and Daigavane, Ameya and Kotak, Mit and Smidt, Tess},
  booktitle={International Conference on Machine Learning},
  pages={68599--68625},
  year={2025},
  organization={PMLR}
}

@book{khersonskii1988quantum,
  title={Quantum theory of angular momemtum},
  author={Khersonskii, Valery Kelmanovich and Moskalev, Anatoly Nikolaevich and Varshalovich, Dmitry Alexandrovich},
  year={1988},
  publisher={World Scientific}
}

@inproceedings{passaro2023reducing,
  title={Reducing SO (3) convolutions to SO (2) for efficient equivariant GNNs},
  author={Passaro, Saro and Zitnick, C Lawrence},
  booktitle={International conference on machine learning},
  pages={27420--27438},
  year={2023},
  organization={PMLR}
}

@inproceedings{
lin2025tensor,
title={Tensor Decomposition Networks for Accelerating Machine Learning Force Field Computations},
author={Yuchao Lin and Cong Fu and Zachary Krueger and Haiyang Yu and Maho Nakata and Jianwen Xie and Emine Kucukbenli and Xiaofeng Qian and Shuiwang Ji},
booktitle={The Thirty-ninth Annual Conference on Neural Information Processing Systems},
year={2025},
url={https://openreview.net/forum?id=9vKJyCUfMH}
}

@article{zhang2025artificial,
  title={Artificial intelligence for science in quantum, atomistic, and continuum systems},
  author={Zhang, Xuan and Wang, Limei and Helwig, Jacob and Luo, Youzhi and Fu, Cong and Xie, Yaochen and Liu, Meng and Lin, Yuchao and Xu, Zhao and Yan, Keqiang and others},
  journal={Foundations and Trends{\textregistered} in Machine Learning},
  volume={18},
  number={4},
  pages={385--849},
  year={2025},
  publisher={Emerald Publishing Limited}
}

@article{bronstein2021geometric,
  title={Geometric deep learning: Grids, groups, graphs, geodesics, and gauges},
  author={Bronstein, Michael M and Bruna, Joan and Cohen, Taco and Veli{\v{c}}kovi{\'c}, Petar},
  journal={arXiv preprint arXiv:2104.13478},
  year={2021}
}

@article{villar2021scalars,
  title={Scalars are universal: Equivariant machine learning, structured like classical physics},
  author={Villar, Soledad and Hogg, David W and Storey-Fisher, Kate and Yao, Weichi and Blum-Smith, Ben},
  journal={Advances in neural information processing systems},
  volume={34},
  pages={28848--28863},
  year={2021}
}

@article{kondor2025principles,
  title={The principles behind equivariant neural networks for physics and chemistry},
  author={Kondor, Risi},
  journal={Proceedings of the National Academy of Sciences},
  volume={122},
  number={41},
  pages={e2415656122},
  year={2025},
  publisher={National Academy of Sciences}
}

@article{fei2024rotation,
  title={Rotation invariance and equivariance in 3D deep learning: a survey},
  author={Fei, Jiajun and Deng, Zhidong},
  journal={Artificial Intelligence Review},
  volume={57},
  number={7},
  pages={168},
  year={2024},
  publisher={Springer}
}

@article{anderson2019cormorant,
  title={Cormorant: Covariant molecular neural networks},
  author={Anderson, Brandon and Hy, Truong Son and Kondor, Risi},
  journal={Advances in neural information processing systems},
  volume={32},
  year={2019}
}

@article{thomas2018tensor,
  title={Tensor field networks: Rotation-and translation-equivariant neural networks for 3d point clouds},
  author={Thomas, Nathaniel and Smidt, Tess and Kearnes, Steven and Yang, Lusann and Li, Li and Kohlhoff, Kai and Riley, Patrick},
  journal={arXiv preprint arXiv:1802.08219},
  year={2018}
}

@article{chanussot2021open,
  title={Open catalyst 2020 (OC20) dataset and community challenges},
  author={Chanussot, Lowik and Das, Abhishek and Goyal, Siddharth and Lavril, Thibaut and Shuaibi, Muhammed and Riviere, Morgane and Tran, Kevin and Heras-Domingo, Javier and Ho, Caleb and Hu, Weihua and others},
  journal={Acs Catalysis},
  volume={11},
  number={10},
  pages={6059--6072},
  year={2021},
  publisher={ACS Publications}
}

@article{SchNet,
  title={Schnet: A continuous-filter convolutional neural network for modeling quantum interactions},
  author={Sch{\"u}tt, Kristof and Kindermans, Pieter-Jan and Sauceda Felix, Huziel Enoc and Chmiela, Stefan and Tkatchenko, Alexandre and M{\"u}ller, Klaus-Robert},
  journal={Advances in neural information processing systems},
  volume={30},
  year={2017}
}

@article{DimeNet++,
  title={Fast and uncertainty-aware directional message passing for non-equilibrium molecules},
  author={Gasteiger, Johannes and Giri, Shankari and Margraf, Johannes T and G{\"u}nnemann, Stephan},
  journal={arXiv preprint arXiv:2011.14115},
  year={2020}
}

@article{GemNet-dT,
  title={Gemnet: Universal directional graph neural networks for molecules},
  author={Gasteiger, Johannes and Becker, Florian and G{\"u}nnemann, Stephan},
  journal={Advances in neural information processing systems},
  volume={34},
  pages={6790--6802},
  year={2021}
}

@inproceedings{SphereNet,
  title={Spherical message passing for {3D} molecular graphs},
  author={Liu, Yi and Wang, Limei and Liu, Meng and Lin, Yuchao and Zhang, Xuan and Oztekin, Bora and Ji, Shuiwang},
  booktitle={International Conference on Learning Representations},
  year={2022}
}

@article{comenet,
  title={ComENet: Towards complete and efficient message passing for 3D molecular graphs},
  author={Wang, Limei and Liu, Yi and Lin, Yuchao and Liu, Haoran and Ji, Shuiwang},
  journal={Advances in Neural Information Processing Systems},
  volume={35},
  pages={650--664},
  year={2022}
}

@inproceedings{
equiformer,
title={Equiformer: Equivariant Graph Attention Transformer for 3D Atomistic Graphs},
author={Yi-Lun Liao and Tess Smidt},
booktitle={The Eleventh International Conference on Learning Representations },
year={2023},
url={https://openreview.net/forum?id=KwmPfARgOTD}
}

@inproceedings{
equiformerv2,
title={EquiformerV2: Improved Equivariant Transformer for Scaling to Higher-Degree Representations},
author={Yi-Lun Liao and Brandon M Wood and Abhishek Das and Tess Smidt},
booktitle={The Twelfth International Conference on Learning Representations},
year={2024},
url={https://openreview.net/forum?id=mCOBKZmrzD}
}

@software{rdkit,
  author       = {Greg Landrum and
                  Paolo Tosco and
                  Brian Kelley and
                  Ricardo Rodriguez and
                  David Cosgrove and
                  Riccardo Vianello and
                  sriniker and
                  Peter Gedeck and
                  Gareth Jones and
                  Eisuke Kawashima and
                  NadineSchneider and
                  Dan Nealschneider and
                  tadhurst-cdd and
                  Andrew Dalke and
                  Matt Swain and
                  Brian Cole and
                  Samo Turk and
                  Aleksandr Savelev and
                  Niels Maeder and
                  Yakov Pechersky and
                  Alain Vaucher and
                  Maciej Wójcikowski and
                  Rachel Walker and
                  Hussein Faara and
                  Ichiru Take and
                  Vincent F. Scalfani and
                  Daniel Probst and
                  Kazuya Ujihara and
                  Jeremy Monat and
                  Juuso Lehtivarjo},
  title        = {rdkit/rdkit: 2026\_03\_2 (Q1 2026) Release},
  month        = apr,
  year         = 2026,
  publisher    = {Zenodo},
  version      = {Release\_2026\_03\_2},
  doi          = {10.5281/zenodo.19922430},
  url          = {https://doi.org/10.5281/zenodo.19922430},
}

@article{nequip,
  title={E (3)-equivariant graph neural networks for data-efficient and accurate interatomic potentials},
  author={Batzner, Simon and Musaelian, Albert and Sun, Lixin and Geiger, Mario and Mailoa, Jonathan P and Kornbluth, Mordechai and Molinari, Nicola and Smidt, Tess E and Kozinsky, Boris},
  journal={Nature communications},
  volume={13},
  number={1},
  pages={2453},
  year={2022},
  publisher={Nature Publishing Group UK London}
}

@inproceedings{
li2026eformer,
title={E2Former: An Efficient and Equivariant Transformer with Linear-Scaling Tensor Products},
author={Yunyang Li and Lin Huang and Zhihao Ding and Xinran Wei and Chu Wang and Han Yang and Zun Wang and Chang Liu and Yu Shi and Peiran Jin and Tao Qin and Mark Gerstein and Jia Zhang},
booktitle={The Thirty-ninth Annual Conference on Neural Information Processing Systems},
year={2026},
url={https://openreview.net/forum?id=ls5L4IMEwt}
}

@inproceedings{schutt2021equivariant,
  title={Equivariant message passing for the prediction of tensorial properties and molecular spectra},
  author={Sch{\"u}tt, Kristof and Unke, Oliver and Gastegger, Michael},
  booktitle={International conference on machine learning},
  pages={9377--9388},
  year={2021},
  organization={PMLR}
}

@article{haghighatlari2022newtonnet,
  title={NewtonNet: a Newtonian message passing network for deep learning of interatomic potentials and forces},
  author={Haghighatlari, Mojtaba and Li, Jie and Guan, Xingyi and Zhang, Oufan and Das, Akshaya and Stein, Christopher J and Heidar-Zadeh, Farnaz and Liu, Meili and Head-Gordon, Martin and Bertels, Luke and others},
  journal={Digital Discovery},
  volume={1},
  number={3},
  pages={333--343},
  year={2022},
  publisher={Royal Society of Chemistry}
}

@software{e3nn_software,
    author = {Mario Geiger and
              Tess Smidt and
              Alby M. and
              Benjamin Kurt Miller and
              Wouter Boomsma and
              Bradley Dice and
              Kostiantyn Lapchevskyi and
              Maurice Weiler and
              Michał Tyszkiewicz and
              Simon Batzner and
              Dylan Madisetti and
              Martin Uhrin and
              Jes Frellsen and
              Nuri Jung and
              Sophia Sanborn and
              Mingjian Wen and
              Josh Rackers and
              Marcel Rød and
              Michael Bailey},
    title = {Euclidean neural networks: e3nn},
    month = apr,
    year = 2022,
    publisher = {Zenodo},
    version = {0.5.0},
    doi = {10.5281/zenodo.6459381},
    url = {https://doi.org/10.5281/zenodo.6459381}
}

@article{fu2025augmenting,
  title={Augmenting Molecular Graphs with Geometries via Machine Learning Interatomic Potentials},
  author={Fu, Cong and Lin, Yuchao and Krueger, Zachary and Yu, Haiyang and Nakata, Maho and Xie, Jianwen and Kucukbenli, Emine and Qian, Xiaofeng and Ji, Shuiwang},
  journal={arXiv preprint arXiv:2507.00407},
  year={2025}
}

\newpage
\appendix

\section{Spin-Weighted Spherical Harmonics (SWSH)}
\label[appsec]{sec:swsh}

Spin-weighted spherical harmonics (SWSH), denoted as ${}_sY_{\ell m}(\theta, \phi)$, are generalizations of the standard spherical harmonics $Y_{\ell m}$. Unlike ordinary spherical harmonics, which are scalar fields, spin-weighted harmonics are functions on the sphere that behave as $\mathrm{U}(1)$ gauge fields, characterized by a degree $\ell$, a magnetic order $m$, and a spin weight $s$ satisfying $|s| \leq \ell$.

In the standard case where $s=0$, these functions reduce to the conventional scalar spherical harmonics, such that ${}_0Y_{\ell m} = Y_{\ell m}$. Like their scalar counterparts, SWSHs form a complete orthonormal basis over the sphere $\mathrm{S}^2$, satisfying the orthogonality condition:
\begin{equation}
    \int_{S^{2}} {}_sY_{\ell m}\, {}_s{Y}_{\ell' m'}^*\, d\Omega = \delta_{\ell\ell'}\delta_{mm'}.
\end{equation}
This orthonormality, combined with their unique transformation properties under local rotations, makes them ideal for representing high-dimensional equivariant features that standard scalar bases cannot fully capture.

\subsection{Explicit Formula}
\label[appsec]{sec:swsh_formula}

The spin-weighted spherical harmonics can be calculated directly using the formula~\citep{goldberg1967spin}
\begin{equation}
\begin{split}
{}_sY_{\ell m}(\theta, \phi) &= A_{s\ell m} \sin^{2\ell}\left({\frac{\theta}{2}}\right)e^{im\phi} \\
&\quad \times \sum_{r=0}^{\ell-s}\left(-1\right)^{r}\binom{\ell-s}{r}\binom{\ell+s}{r+s-m} \\
&\quad \times \cot^{2r+s-m}\left({\frac{\theta}{2}}\right).
\end{split}
\end{equation}

Specifically, the first few spin-weighted spherical harmonics for $s=1$ and $\ell=1$ are given by
\begin{align}
    {}_1Y_{10}(\theta, \phi) &= {\sqrt{\frac{3}{8\pi}}}\,\sin \theta, \quad
    {}_1Y_{1\pm 1}(\theta, \phi) = -{\sqrt{\frac{3}{16\pi}}}(1\mp \cos \theta)\,e^{\pm i\phi}.
\end{align}

With the phase convention used in this definition, the harmonics satisfy the following conjugation and parity relations
\begin{align}
    {}_s {Y}_{\ell m}^* &= (-1)^{s+m}{}_{-s}Y_{\ell(-m)}, \quad
    {}_sY_{\ell m}(\pi - \theta, \phi + \pi) = (-1)^{\ell}{}_{-s}Y_{\ell m}(\theta, \phi).
\end{align}

\subsection{Equivariance and Spin-Weighted Transformations}
\label[appsec]{sec:equiv}

A defining characteristic of spin-weighted spherical harmonics is their transformation law under the rotation group $\mathrm{SO}(3)$. Unlike standard scalar harmonics ($s=0$), which transform solely via the Wigner $D$-matrices, SWSHs are sections of a line bundle and thus pick up a local phase shift corresponding to the rotation of the local tangent frame.

\paragraph{Rotation Law.} 
Given a rotation $R \in \mathrm{SO}(3)$, the transformation of a spin-weighted signal at a point $\mathbf{x} \in \mathrm{S}^2$ is governed by:
\begin{equation}
    {}_sY_{\ell m}(R\mathbf{x}) = e^{is\psi(R, \mathbf{x})} \sum_{m'=-\ell}^{\ell} D^{(\ell)}_{m'm}(R) \, {}_sY_{\ell m'}(\mathbf{x})
\end{equation}
where $D^{(\ell)}_{m'm}(R)$ are the elements of the $(2\ell+1)$-dimensional irreducible representation of $\mathrm{SO}(3)$. The term $e^{is\psi}$ represents a $\mathrm{U}(1)$ gauge transformation, where $\psi(R, \mathbf{x})$ is the angle by which the local tangent frame at $\mathbf{x}$ rotates relative to the fixed coordinate basis after applying $R$.

\paragraph{Geometry-Dependent Frames.}

In our implementation, the gauge is fixed by equivariant frames constructed from the input geometry. Before evaluating SWSHs, the input is rotated into a common frame. We implement both the global frame and local node frame based on the geometry.

For the \textbf{global frame}, given a graph \(b\) and point cloud $\{\mathbf{x}_i\}_{i\in b}$, let
\[
    \mathbf c_b
    = \sum_{i\in b} w_i \mathbf x_i,
    \qquad
    \widetilde{\mathbf x}_i=\mathbf x_i-\mathbf c_b .
\]
We construct a graph frame \(F_b\in \mathrm{SO}(3)\) from the eigensystem of the
centered covariance
\begin{equation}
    C_b
    =
    \sum_{i\in b}
    w_i \widetilde{\mathbf x}_i\widetilde{\mathbf x}_i^\top .
\end{equation}
The eigenvector signs are fixed by parity-even axial references
\[
    \mathbf h_b^{(q)}
    =
    \sum_{i\in b}
    w_i \|\widetilde{\mathbf r}_i\|^q
    \widetilde{\mathbf r}_i,
    \qquad
    \mathbf a_b=\mathbf h_b^{(1)}\times \mathbf h_b^{(2)},
    \qquad
    \mathbf b_b=\mathbf h_b^{(2)}\times \mathbf h_b^{(3)} .
\]
If no repeated eigenvalues and vanishing axial references are present, this gives a
permutation-equivariant, \(\mathrm{SO}(3)\)-equivariant, and parity-invariant graph
frame.

If the global frame encounters eigenvector ambiguity, we use the \textbf{local node frame}. For each node \(j\), we similarly construct a local node frame \(F_j\in \mathrm{SO}(3)\)
from the neighborhood quadrupole
\begin{equation}
    Q_j
    =
    \sum_{k\to j}
    w_{jk}\left(
    \|\mathbf r_{jk}\|^2 I
    -
    \mathbf r_{jk}\mathbf r_{jk}^\top
    \right),
\end{equation}
where \(\mathbf r_{jk} = \mathbf x_j - \mathbf x_k\).
Givne $\mathbf g_j=\sum_{k\to j} w_{jk}\mathbf r_{jk}$, the eigenvector signs are fixed by axial references 
\[
    \mathbf h_j^{(q)}
    =
    \sum_{k\to j}
    w_{jk}\|\mathbf r_{jk}\|^q\mathbf r_{jk},
    \qquad
    \mathbf a_j=\mathbf h_j^{(1)}\times \mathbf g_j,
    \qquad
    \mathbf b_j=\mathbf h_j^{(2)}\times \mathbf g_j .
\]
This also gives a permutation-equivariant, \(\mathrm{SO}(3)\)-equivariant, and parity-invariant node frame whenever the local neighborhood is nondegenerate.


\subsection{Nonzero Spin-Weighted Gaunt Paths} 

The spin-weighted Gaunt coefficient is
\begin{equation}
\label{eq:swsh_gaunt_factorized_appendix}
G^{(\ell_3,m_3,s_3)}_{(\ell_1,m_1,s_1)(\ell_2,m_2,s_2)}
=
\sqrt{\frac{\prod_{i=1}^3(2\ell_i+1)}{4\pi}}
\begin{pmatrix}
\ell_1 & \ell_2 & \ell_3\\
m_1 & m_2 & -m_3
\end{pmatrix}
\begin{pmatrix}
\ell_1 & \ell_2 & \ell_3\\
s_1 & s_2 & -s_3
\end{pmatrix}.
\end{equation}
The scalar GTP case fixes \(s_1=s_2=s_3=0\), so the second Wigner symbol vanishes whenever \(\ell_1+\ell_2+\ell_3\) is odd. Allowing signed spin weights removes this limitation at the path level.

\begin{proposition}{Spin-weighted path completion}{}
Let $(\ell_1,\ell_2,\ell_3)$ satisfy the Clebsch-Gordan triangle rule. Then there exist signed spin weights
\[
s_1\in[-\ell_1,\ell_1],\qquad
s_2\in[-\ell_2,\ell_2],\qquad
s_3\in[-\ell_3,\ell_3],
\]
with $s_3=s_1+s_2$, such that the signed-spin path $(\ell_1,s_1)\otimes(\ell_2,s_2)\to(\ell_3,s_3)$
is not identically zero.
\end{proposition}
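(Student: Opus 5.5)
The plan is to reduce the statement to a completeness (orthonormality) property of Clebsch-Gordan coefficients, applied once in the magnetic indices and once in the spin indices.

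By the factorization in~\Cref{eq:swsh_gaunt_factorized_appendix}, the path $(\ell_1,s_1)\otimes(\ell_2,s_2)\to(\ell_3,s_3)$ is not identically zero if and only if two things hold. First, the magnetic factor must be nonzero for at least one admissible triple $(m_1,m_2,m_3)$. Second, the spin factor $\begin{pmatrix}\ell_1&\ell_2&\ell_3\\ s_1&s_2&-s_3\end{pmatrix}$ must be nonzero. The prefactor $\sqrt{\prod_i(2\ell_i+1)/4\pi}$ is strictly positive. So I will show that each Wigner factor can be made nonzero independently; since the two factors involve disjoint indices, this suffices.

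Both factors are handled by the same lemma. For a fixed triangle-admissible $(\ell_1,\ell_2,\ell_3)$ and any fixed $\mu\in[-\ell_3,\ell_3]$, the map
\[
(\mu_1,\mu_2)\mapsto\begin{pmatrix}\ell_1&\ell_2&\ell_3\\ \mu_1&\mu_2&-\mu\end{pmatrix},\qquad \mu_1+\mu_2=\mu,
\]
is not identically zero.

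To prove the lemma, I will use the relation between $3j$ symbols and Clebsch-Gordan coefficients,
\[
C^{(\ell_3,\mu)}_{(\ell_1,\mu_1)(\ell_2,\mu_2)}=(-1)^{\ell_1-\ell_2+\mu}\sqrt{2\ell_3+1}\begin{pmatrix}\ell_1&\ell_2&\ell_3\\ \mu_1&\mu_2&-\mu\end{pmatrix}.
\]
Because the triangle rule holds, the irrep $\ell_3$ occurs in $\ell_1\otimes\ell_2$. Hence $|\ell_3,\mu\rangle=\sum_{\mu_1+\mu_2=\mu}C^{(\ell_3,\mu)}_{(\ell_1,\mu_1)(\ell_2,\mu_2)}|\ell_1\mu_1\rangle|\ell_2\mu_2\rangle$ is a unit vector. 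Equivalently, the orthonormality relation gives $\sum_{\mu_1,\mu_2}|C^{(\ell_3,\mu)}_{(\ell_1,\mu_1)(\ell_2,\mu_2)}|^2=1$. So at least one term is nonzero. The sum runs only over $|\mu_i|\le\ell_i$, so the resulting indices automatically satisfy the range constraints.

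Applying the lemma, I first pick any $m_3$ (say $m_3=0$) and obtain $(m_1,m_2)$ with a nonzero magnetic factor. I then pick any $s_3\in[-\ell_3,\ell_3]$ and obtain $(s_1,s_2)$ with $s_1+s_2=s_3$, $|s_i|\le\ell_i$, and a nonzero spin factor. The constraint $s_3=s_1+s_2$ is exactly the $3j$ selection rule $s_1+s_2+(-s_3)=0$, so it comes for free.

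I will also remark why spin is genuinely needed, and check the construction on the cross-product path. When $\ell_1+\ell_2+\ell_3$ is odd, the symmetry $\begin{pmatrix}\ell_1&\ell_2&\ell_3\\ -s_1&-s_2&s_3\end{pmatrix}=(-1)^{\ell_1+\ell_2+\ell_3}\begin{pmatrix}\ell_1&\ell_2&\ell_3\\ s_1&s_2&-s_3\end{pmatrix}$ forces the all-zero choice to vanish. The lemma then necessarily produces some $s_i\neq0$. For the cross-product path $[1,1,1]$, the choice $s_1=1,s_2=0,s_3=1$ works, since $\begin{pmatrix}1&1&1\\1&0&-1\end{pmatrix}=1/\sqrt6\neq0$.

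The main obstacle is not the algebra. It is making sure the generalized Gaunt formula~\Cref{eq:generalized_gaunt} is valid in the paper's phase conventions, for which I would rely on its stated form. It is also necessary to be clear that the claim is only at the signed-spin complex level; the real and parity-labeled recombinations in~\Cref{sec:real_swsh_main} and~\Cref{sec:parity_swsh_main} are not covered, and no claim is made there.
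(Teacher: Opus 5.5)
Your proof is correct, but it takes a different route from the paper.

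The paper gives an explicit witness. It takes the stretched configuration $s_1=\ell_1$, $s_2=\ell_3-\ell_1$, $s_3=\ell_3$, and uses the same values for $m_1,m_2,m_3$. It then invokes the closed form for a $3j$ symbol with extremal projections, $|S|=\bigl[(2\ell_1)!\,(2\ell_3)!/\bigl((\ell_1-\ell_2+\ell_3)!\,(\ell_1+\ell_2+\ell_3+1)!\bigr)\bigr]^{1/2}$, which is manifestly nonzero under the triangle inequalities. This yields the concrete coefficient $G=\sqrt{\prod_i(2\ell_i+1)/4\pi}\,S^2$.

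You instead use unitarity of the Clebsch--Gordan change of basis. For each fixed $\mu$, the coefficient vector $\bigl(C^{(\ell_3,\mu)}_{(\ell_1,\mu_1)(\ell_2,\mu-\mu_1)}\bigr)_{\mu_1}$ has unit norm, so some entry is nonzero. You apply this separately to the magnetic factor and the spin factor.

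Your route needs no closed-form $3j$ formula and proves more: every target $s_3\in[-\ell_3,\ell_3]$ and every $m_3$ is reachable. The cost is that it is non-constructive. The paper's route exhibits a specific nonzero path and its value, which is convenient for checking an implementation, but it relies on the extremal spins $s_1=\ell_1$, $s_3=\ell_3$. Neither argument bounds $|s_i|$ by $1$, which the practical $s_{\max}=1$ models would need, but the statement does not ask for that.

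Your cross-product check is exactly the paper's construction specialized to $\ell_1=\ell_2=\ell_3=1$. There is a harmless sign slip there: in standard conventions $\begin{pmatrix}1&1&1\\1&0&-1\end{pmatrix}=-1/\sqrt{6}$, and it is $\begin{pmatrix}1&1&1\\1&-1&0\end{pmatrix}$ that equals $+1/\sqrt{6}$.

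Both proofs use the factorized Gaunt formula only up to an overall sign. Your worry about phase conventions is therefore immaterial for nonvanishing.
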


\begin{proof}
Choose
\[
    s_1=\ell_1,\qquad
    s_2=\ell_3-\ell_1,\qquad
    s_3=\ell_3 .
\]
Then \(s_3=s_1+s_2\) and $-\ell_2\le \ell_3-\ell_1\le \ell_2$, so all chosen spin weights are admissible. Consider the spin Wigner factor
\[
S
=
    \begin{pmatrix}
    \ell_1 & \ell_2 & \ell_3\\
    \ell_1 & \ell_3-\ell_1 & -\ell_3
    \end{pmatrix}.
\]
For this boundary case, the Wigner \(3j\) symbol has the explicit magnitude
\[
|S|
=
\left[
\frac{(2\ell_1)!(2\ell_3)!}
{(\ell_1-\ell_2+\ell_3)!(\ell_1+\ell_2+\ell_3+1)!}
\right]^{1/2}.
\]
All factorial arguments are nonnegative by the triangle inequalities, so $S\neq 0$. For these fixed spin weights, the spin-weighted Gaunt coefficient factors as
\[
G^{(\ell_3,m_3,s_3)}_{(\ell_1,m_1,s_1)(\ell_2,m_2,s_2)}
=
\sqrt{\frac{\prod_{i=1}^3(2\ell_i+1)}{4\pi}}  
\begin{pmatrix}
\ell_1 & \ell_2 & \ell_3\\
m_1 & m_2 & -m_3
\end{pmatrix} S.
\]
Taking
\[
    m_1=\ell_1,\qquad
    m_2=\ell_3-\ell_1,\qquad
    m_3=\ell_3
\]
gives the same nonzero Wigner factor in the magnetic part. Hence
\[
G^{(\ell_3,m_3,s_3)}_{(\ell_1,m_1,s_1)(\ell_2,m_2,s_2)}
=
\sqrt{\frac{\prod_{i=1}^3(2\ell_i+1)}{4\pi}} S^2
\neq 0 .
\]
Therefore the signed-spin Gaunt tensor has a nonzero entry and the
corresponding bilinear map is not identically zero.
\end{proof}

\section{Best Asymptotic Runtime Cost}
\label[appsec]{sec:time_cost}

\begin{proposition}{Runtime complexity for $|s|_{\max}=1$}{}
Given bounded multiplicities and a maximum angular degree $L$, a fully connected SpinGTP can be evaluated in $O(L^4\log^2L)$ using fast spherical transforms.
\end{proposition}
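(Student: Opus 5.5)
The plan is to reduce a fully connected SpinGTP with $|s|_{\max}=1$ to a bounded number of scalar-type Gaunt products between spin-weighted spherical signals, and to evaluate each by the fast-transform pipeline of~\cite{luo2024enabling,xie2026asymptotically}: forward transform, pointwise product on a grid, backward transform.

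\textbf{Step 1: decompose by spin sector.} Each input irrep is indexed by $(\ell,|s|,p)$ with $|s|\in\{0,1\}$. Through the unitary $\mathbf{Q}$ of~\Cref{sec:real_swsh_main} and the parity combination~\Cref{eq:parity_basis}, I rewrite each real parity-labeled feature as complex coefficients in the signed sectors $s\in\{-1,0,1\}$. This costs $O(L^2)$ per channel, since $\mathbf{Q}$ is block-sparse with $O(1)$ nonzeros per row. The generalized Gaunt integral~\Cref{eq:generalized_gaunt} factorizes the contraction over $(m_1,m_2,m_3)$ for each admissible triple $(s_1,s_2,s_3)$ with $s_3=s_1+s_2$ and $|s_i|\le 1$. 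There are only a constant number of such triples (at most 7). For each triple, the path-summed contraction $\sum_{\ell_1,\ell_2}\sum_{m_1,m_2} G\,x^{(1)}x^{(2)}$ is exactly the ${}_{s_3}Y_{\ell_3 m_3}$-coefficient of the pointwise product $f_{s_1}(\mathbf{x})g_{s_2}(\mathbf{x})$. Here $f_{s}=\sum_{\ell m}\hat f_{\ell m}\,{}_{s}Y_{\ell m}$, and spin weights add under pointwise multiplication.

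\textbf{Step 2: handle per-path weights.} A fully connected product carries independent weights for each $(\ell_1,\ell_2,\ell_3)$ path, so a single pointwise product does not suffice. Following VSTP~\cite{xie2026asymptotically}, I handle this by the same device used there. Per-path weights that are separable in $\ell_1,\ell_2$ are absorbed into the inputs by scaling $\hat f_{\ell_1 m}$ and $\hat g_{\ell_2 m}$. General weights are reduced to $O(L)$ products over each input degree $\ell_1$, each costing one spherical transform. That is, for each fixed $\ell_1$ one computes $f^{(\ell_1)}_{s_1}\cdot g^{(\ell_3\text{-weighted})}$ and projects. Bounded multiplicities contribute only a constant factor.

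\textbf{Step 3: count the cost.} Forward and inverse spin-weighted spherical transforms at bandlimit $L$ can be done in $O(L^2\log^2L)$. This uses the Driscoll--Healy/Healy--Rockmore fast transforms~\citep{healy2003ffts}, extended to spin weight $\pm1$, e.g.\ via the Wigner-$d$ representation ${}_sY_{\ell m}\propto D^{\ell}_{m,-s}$ with fast associated-function recurrences for fixed $s$. Products of bandlimit-$L$ signals have bandlimit $2L$, so a $2L$ grid suffices and costs a constant factor. The total is
\[
O(1)\ \text{spin triples}\times O(L)\ \text{degree splits}\times O(L)\ \text{output splits}\times O(L^2\log^2 L)\ \text{per transform},
\]
which gives $O(L^4\log^2L)$, matching VSTP. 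The final reconversion to the real parity basis through $\mathbf{Q}^{-1}$ is again linear in the coefficient count.

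\textbf{Main obstacle.} The hardest step is Step 2. One must show that arbitrary fully connected path weights need only $O(L^2)$ transforms rather than $O(L^3)$. The tempting direct approach, one product per $(\ell_1,\ell_2)$ pair followed by projection onto all $\ell_3$, gives $O(L^2)$ products at $O(L^2\log^2L)$ each, i.e.\ $O(L^4\log^2L)$ already. So I would first try exactly that: for each $(\ell_1,\ell_2)$ and spin triple, do one pointwise product, one inverse transform, and multiply the resulting $\ell_3$ coefficients by $W_{\ell_1\ell_2\ell_3}$. The subtlety is that this per-pair product only needs $O(L)$-sized inputs, but the output transform is still size $L^2\log^2L$. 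I would also check that the spin-weight-$\pm1$ fast transform truly attains the $\log^2L$ bound uniformly, citing~\citep{healy2003ffts} and the VSTP analysis for this.
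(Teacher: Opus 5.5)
Your proposal is correct, and once you reach the ``direct approach'' in your final paragraph it is exactly the paper's proof: for each $(\ell_1,s_1,\ell_2,s_2)$, two inverse transforms, one pointwise product and one forward transform, with all admissible $\ell_3$ and their independent path weights read off from that single forward transform, giving $O(L^2)\times O(L^2\log^2 L)=O(L^4\log^2 L)$. Your Step~2 detour (products indexed by $(\ell_1,\ell_3)$ with an $\ell_3$-reweighted second input) is also valid but unnecessary, and the paper simply cites the Healy--Rockmore fast transforms for the per-transform cost, without the spin-weight-$\pm1$ extension check you flag.
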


\begin{proof}
Let degrees $0\le \ell_1,\ell_2\le L$ and spin weights $-1\le s_1,s_2\le 1$. Define the spherical signals
\begin{equation}
\begin{aligned}
f(\Omega)
= \sum_{m_1=-\ell_1}^{\ell_1}
x_{\ell_1m_1s_1}^{(1)}
\,{}_{s_1}Y_{\ell_1m_1}(\Omega) \quad \text{and} \quad
g(\Omega)
= \sum_{m_2=-\ell_2}^{\ell_2}
x_{\ell_2m_2s_2}^{(2)}
\,{}_{s_2}Y_{\ell_2m_2}(\Omega).
\end{aligned}
\end{equation}
For $s_3=s_1+s_2$, the coefficient of the pointwise product of $f(\Omega)$ and $g(\Omega)$ is
\begin{equation}
\begin{aligned}
\int_{\mathrm S^2}
f(\Omega)g(\Omega)\,
{}_{s_3}Y_{\ell_3m_3}^{*}(\Omega)\,d\Omega
&=
\sum_{m_1,m_2}
G^{(\ell_3,m_3,s_3)}
 _{(\ell_1,m_1,s_1)(\ell_2,m_2,s_2)}
x_{\ell_1m_1s_1}^{(1)}
x_{\ell_2m_2s_2}^{(2)}.
\end{aligned}
\end{equation}
Thus, a tensor product path of $(\ell_1,s_1,\ell_2,s_2)$ is evaluated by two inverse spherical transforms,
one pointwise product, and one forward spherical transform. Since
$\ell_3\leq\ell_1+\ell_2\leq2L$, the forward and backward transform costs $O(L^2\log^2L)$ using fast spherical transform algorithm~\cite{healy2003ffts}. The pointwise multiplication itself costs $O(L^2)$. Therefore, a path of $(\ell_1,s_1,\ell_2,s_2)$ costs $O(L^2 \log^2 L) + O(L^2) = O(L^2 \log^2 L)$.

There are $(L+1)^2=O(L^2)$ input degree pairs of $(\ell_1,\ell_2)$. These pairs are evaluated separately for the independent path weights in~\Cref{eq:spin_gtp}. Each spherical transform returns all admissible output degrees $\ell_3$ simultaneously, so no additional factor of $L$ is required. Consequently, the time complexity of SpinGTP is
\begin{equation}
\begin{aligned}
T_{\mathrm{SpinGTP}}(L)
&=
O(L^2)\times O(L^2\log^2L)\\
&=
O(L^4\log^2L).
\end{aligned}
\end{equation}
This is the same asymptotic cost as the complete VSTP of~\cite{xie2026asymptotically}.
\end{proof}

\section{Real Basis Spin-Weighted Spherical Harmonics}
\label[appsec]{sec:real_swsh}


In computational physics and deep learning, while complex-valued representations are mathematically natural, real-valued features are often preferred for memory efficiency and compatibility with standard nonlinearities. We therefore store SWSH features in a real orthonormal basis and use a unitary change of basis only when evaluating complex Gaunt contractions.

\subsection{The Unitary $\mathbf{Q}$ Matrix}

Let \(\mathbf a_{\text{real}}\) denote real-basis coefficients and
\(\mathbf a_{\text{complex}}\) denote complex SWSH coefficients. We use a
unitary matrix \(\mathbf Q\) such that
\begin{equation}
    \mathbf a_{\text{complex}}=\mathbf Q\mathbf a_{\text{real}},
    \qquad
    \mathbf a_{\text{real}}
    =
    \operatorname{Re}\!\left(\mathbf Q^\dagger \mathbf a_{\text{complex}}\right).
\end{equation}
For the scalar case \(|s|=0\), \(\mathbf Q\in
\mathbb C^{(2\ell+1)\times(2\ell+1)}\). With rows and columns indexed by
\(m,k\in\{-\ell,\ldots,\ell\}\), the matrix elements used in our
implementation are
\begin{equation}
\mathbf Q_{m,k} = \begin{cases} 
1 & \text{if } m=0,\ k=0, \\[2pt]
\frac{1}{\sqrt{2}} & \text{if } m>0,\ k=m, \\[2pt]
\frac{i}{\sqrt{2}} & \text{if } m>0,\ k=-m, \\[2pt]
\frac{(-1)^{|m|}}{\sqrt{2}} & \text{if } m<0,\ k=|m|, \\[2pt]
\frac{-i(-1)^{|m|}}{\sqrt{2}} & \text{if } m<0,\ k=-|m|, \\[2pt]
0 & \text{otherwise}.
\end{cases}
\end{equation}
For \(|s|>0\), both signed spin sectors \(s\) and \(-s\) are retained.
The corresponding matrix has size
\[
    \mathbf Q\in
    \mathbb C^{2(2\ell+1)\times 2(2\ell+1)},
\]
with the complex coefficients ordered by the \(-|s|\) sector followed by the
\(|s|\) sector. This doubled matrix couples the paired \(\pm s\) and
\(\pm m\) components using~\Cref{eq:real_swsh}.

\subsection{Example: The $\mathbf{Q}$ Matrix for $\ell=1$}
For \(\ell=1\) and \(|s|=0\), with both rows and columns ordered as
\((-1,0,+1)\), the transformation matrix is
\begin{equation}
\mathbf{Q}^{(1)} =
\frac{1}{\sqrt{2}}
\begin{pmatrix}
 i & 0 & -1 \\
 0 & \sqrt{2} & 0 \\
 i & 0 & 1
\end{pmatrix}.
\end{equation}

By using this, real coefficients are mapped to the complex signed-spin basis, the generalized Gaunt contraction is evaluated, and the result is mapped back to the real basis using \(\mathbf Q^\dagger\).

\section{Derivation of the Parity-Equivariant Basis}
\label[appsec]{sec:parity_derivation}

A signal $f(\mathbf{x})$ is said to possess a definite parity $p \in \{+1, -1\}$ if, under the spatial inversion operator $P: \mathbf{x} \to -\mathbf{x}$, it satisfies the equivariance relation:
\begin{equation}
\label{eq:parity_def}
f(P\mathbf{x}) = p f(\mathbf{x}).
\end{equation}
For standard scalar spherical harmonics ($s=0$), parity is intrinsically tied to the degree $\ell$, where $Y_{\ell m}(-\mathbf{x}) = (-1)^\ell Y_{\ell m}(\mathbf{x})$. However, spin-weighted spherical harmonics exhibit a more complex behavior under inversion. As shown in~\Cref{sec:swsh_formula}, the parity transformation for SWSH involves a simultaneous inversion of the spin weight:
\begin{equation}
\label{eq:complex_parity}
{}_sY_{\ell m}(-\mathbf{x}) = (-1)^{\ell} {}_{-s}Y_{\ell m}(\mathbf{x}).
\end{equation}
Because the inversion maps a function of spin weight $s$ to a function of spin weight $-s$, the individual basis functions do not possess a well-defined parity unless $s=0$. This same coupling persists in the real-basis representation ${}_sR_{\ell m}$ defined in Eq.~\ref{eq:real_swsh}:
\begin{equation}
\label{eq:real_parity_relation}
{}_sR_{\ell m}(-\mathbf{x}) = (-1)^{\ell} {}_{-s}R_{\ell m}(\mathbf{x}).
\end{equation}
To construct a basis that satisfies the parity-equivariance constraint in Eq.~\ref{eq:parity_def}, we seek a linear combination of the spin-doubled basis functions. We define the Parity-Equivariant SWSH Basis, denoted $\mathcal{R}_{p}^{s,\ell,m}(\mathbf{x})$, as:
\begin{equation}
\label{eq:parity_basis_def}
\mathcal{R}_{p}^{s,\ell,m}(\mathbf{x}) = \frac{1}{\sqrt{2}} \left( {}_{-s}R_{\ell m}(\mathbf{x}) + p(-1)^\ell \, {}_{s}R_{\ell m}(\mathbf{x}) \right).
\end{equation}

\begin{proposition}{Parity Equivariance}{}
$\mathcal{R}_{p}^{s,\ell,m}$ satisfies parity equivariance relation~\Cref{eq:parity_def}.
\end{proposition}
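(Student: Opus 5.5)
The plan is to evaluate $\mathcal{R}_{p}^{s,\ell,m}$ at $-\mathbf{x}$ directly from its definition in~\Cref{eq:parity_basis_def}. I will then apply the real-basis inversion relation~\Cref{eq:real_parity_relation} to each of the two terms. Linearity of inversion (it acts by precomposition) gives
\begin{equation*}
\mathcal{R}_{p}^{s,\ell,m}(-\mathbf{x}) = \frac{1}{\sqrt{2}}\left( {}_{-s}R_{\ell m}(-\mathbf{x}) + p(-1)^\ell\, {}_{s}R_{\ell m}(-\mathbf{x}) \right).
\end{equation*}
Substituting ${}_{\pm s}R_{\ell m}(-\mathbf{x}) = (-1)^\ell\, {}_{\mp s}R_{\ell m}(\mathbf{x})$ yields
\begin{equation*}
\frac{1}{\sqrt{2}}\left( (-1)^\ell\, {}_{s}R_{\ell m}(\mathbf{x}) + p\, {}_{-s}R_{\ell m}(\mathbf{x}) \right).
\end{equation*}
Using $p^2=1$, I factor out $p$ to obtain
\begin{equation*}
p\cdot\frac{1}{\sqrt{2}}\left( {}_{-s}R_{\ell m}(\mathbf{x}) + p(-1)^\ell\, {}_{s}R_{\ell m}(\mathbf{x}) \right) = p\,\mathcal{R}_{p}^{s,\ell,m}(\mathbf{x}),
\end{equation*}
which is exactly~\Cref{eq:parity_def}.

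The only substantive step, and the one I expect to need care, is justifying~\Cref{eq:real_parity_relation} from the complex relation~\Cref{eq:complex_parity} and the definition~\Cref{eq:real_swsh}. For $m>0$, I will apply inversion to ${}_sY_{\ell m} + (-1)^{m+s}\,{}_{-s}Y_{\ell,-m}$, which gives $(-1)^\ell\bigl({}_{-s}Y_{\ell m} + (-1)^{m+s}\,{}_{s}Y_{\ell,-m}\bigr)$. I will compare this with ${}_{-s}R_{\ell m}$, which is built from ${}_{-s}Y_{\ell m} + (-1)^{m-s}\,{}_{s}Y_{\ell,-m}$. Because $s$ is an integer, $(-1)^{m+s}=(-1)^{m-s}$, so the two expressions match.

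The cases $m<0$ and $m=0$ are handled the same way. For $m<0$, the sign $(-1)^{-m+s}$ again agrees with $(-1)^{-m-s}$. For $m=0$, one needs ${}_sY_{\ell 0}(-\mathbf{x}) = (-1)^\ell\,{}_{-s}Y_{\ell 0}(\mathbf{x})$, which is~\Cref{eq:complex_parity} directly.

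With that lemma established, the algebra above completes the proof. No condition on $s$, $\ell$ or $m$ beyond $s,m\in\mathbb{Z}$ is needed. The case $s=0$ needs no separate treatment: the same computation applies, and the basis function is either a multiple of $Y_{\ell m}$ or vanishes, depending on whether $p=(-1)^\ell$.
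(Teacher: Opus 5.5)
Your proposal is correct and follows the paper's proof. Evaluate the definition at $-\mathbf{x}$, substitute the real-basis inversion relation into both terms, use $(-1)^{2\ell}=1$, and factor out $p$ via $p^2=1$; the one addition is your case-by-case derivation of ${}_sR_{\ell m}(-\mathbf{x})=(-1)^\ell\,{}_{-s}R_{\ell m}(\mathbf{x})$ from the complex relation (hinging on $(-1)^{m+s}=(-1)^{m-s}$), which the paper only asserts, and that check is correct and makes the argument self-contained.
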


\begin{proof}
To verify that this construction yields a basis with parity $p$, we apply the inversion operator $P$ such that
\begin{align}
\mathcal{R}_{p}^{s,\ell,m}(-\mathbf{x}) &= \frac{1}{\sqrt{2}} \left( {}_{-s}R_{\ell m}(-\mathbf{x}) + p(-1)^\ell \, {}_{s}R_{\ell m}(-\mathbf{x}) \right) \nonumber \\
&= \frac{1}{\sqrt{2}} \left( (-1)^\ell \, {}_{s}R_{\ell m}(\mathbf{x}) + p(-1)^\ell \, (-1)^\ell \, {}_{-s}R_{\ell m}(\mathbf{x}) \right) \nonumber \\
&= \frac{1}{\sqrt{2}} \left( (-1)^\ell \, {}_{s}R_{\ell m}(\mathbf{x}) + p \, {}_{-s}R_{\ell m}(\mathbf{x}) \right) \nonumber \\
&= p \left( \frac{1}{\sqrt{2}} \left( {}_{-s}R_{\ell m}(\mathbf{x}) + \frac{(-1)^\ell}{p} \, {}_{s}R_{\ell m}(\mathbf{x}) \right) \right) \nonumber \\
&= p \left( \frac{1}{\sqrt{2}} \left( {}_{-s}R_{\ell m}(\mathbf{x}) + p(-1)^\ell \, {}_{s}R_{\ell m}(\mathbf{x}) \right) \right) \nonumber \\
&= p \mathcal{R}_{p}^{s,\ell,m}(\mathbf{x}).
\end{align}
\end{proof}

This derivation confirms that the basis $\mathcal{R}_{p}^{s,\ell,m}(\mathbf{x})$ adheres strictly to the prescribed parity $p$. By decoupling the parity-even ($p=+1$) and parity-odd ($p=-1$) sectors, this formulation allows the network to explicitly represent pseudoscalar and axial-vector interactions, which are essential for resolving the antisymmetry gap in chiral geometric modeling.

\section{SpinGTP Implementation and Time Comparison}
\label[appsec]{sec:spingtp_impl}

We follow a similar implementation style with respect to e3nn~\cite{e3nn_software}.

\subsection{Tensor Product Implementation}
\label[appsec]{sec:tp_impl}

\paragraph{Multiplicity connection modes.}
All SpinGTP variants use the same real SWSH Gaunt kernel. For one instruction involving irreps
$\rho_1=(\ell_1,|s_1|,p_1)$, $\rho_2=(\ell_2,|s_2|,p_2)$, and $\rho_3=(\ell_3,|s_3|,p_3)$ with $p_3=p_1p_2$, let
\begin{equation}
\mathbf{K}^{\rho_3}_{\rho_1,\rho_2}(m_1,m_2,m_3)
\end{equation}
denote the real-basis SpinGTP kernel obtained from the signed-spin Gaunt contraction $G$ and parity recombination described in \Cref{eq:spin_gtp}. Here $m_1,m_2,m_3$ index the real irrep components, and $u,v,w$ index multiplicity channels. The different tensor-product layers differ only in how the multiplicity indices are connected.

\textbf{Fully connected tensor product.}
The fully connected mode (mode \texttt{uvw}) enumerates all admissible irrep-block pairs $(\rho_1,\rho_2)$ and all requested output irreps $\rho_3\in \rho_1\otimes\rho_2$. For each instruction, it first forms
\begin{equation}
R_{uv,m_3} = \sum_{m_1,m_2} \mathbf{K}^{\rho_3}_{\rho_1,\rho_2}(m_1,m_2,m_3) \mathbf{x}^{(1)}_{u,m_1}\mathbf{x}^{(2)}_{v,m_2},
\end{equation}
and then applies a learned dense multiplicity mixing
\begin{equation}
[\mathbf{x}^{(1)} \otimes_{\mathbf{W}} \mathbf{x}^{(2)}]^{(w,m_3)} = \sum_{u=1}^{U}\sum_{v=1}^{V} \mathbf{W}_{uvw}R_{uv,m_3}.
\end{equation}
The parameter count per instruction is $UVW$.

\textbf{Depthwise tensor product.}
The depthwise mode (mode \texttt{uvu}) is fully connected over admissible irrep-block pairs, but preserves the first input multiplicity channel. For each instruction, the output multiplicity is tied to the first input multiplicity ($w=u$), and the layer computes
\begin{equation}
[\mathbf{x}^{(1)} \otimes_{\mathbf{W}} \mathbf{x}^{(2)}]^{(u,m_3)} = \sum_{v=1}^{V} \mathbf{W}_{uv} \sum_{m_1,m_2} \mathbf{K}^{\rho_3}_{\rho_1,\rho_2}(m_1,m_2,m_3) \mathbf{x}^{(1)}_{u,m_1}\mathbf{x}^{(2)}_{v,m_2}.
\end{equation}
In the molecular message-passing setting, the second input is usually an edge basis with $V=1$. Following Equation (8) and (9) in the main text, we utilize the pre-contracted kernel $\mathbf{K}_{\mathrm{pre}}$ to evaluate the channel-depthwise form
\begin{equation}
[\mathbf{x}^{(1)} \otimes_{\mathbf{W}} \mathbf{x}^{(2)}]^{(u,m_3)} = \mathbf{W}_u \sum_{m_1} \mathbf{K}_{\mathrm{pre}}^{\rho_3}(m_1,m_3) \mathbf{x}^{(1)}_{u,m_1}.
\end{equation}
Thus the parameter count per instruction is $UV$, or $U$ when $V=1$. The optimized \texttt{cached\_uvu} and \texttt{triton\_uvu} backends are specialized for this common $V=1$ edge-feature case.

\textbf{Elementwise tensor product.}
The elementwise mode (mode \texttt{uuu}) pairs corresponding input irrep blocks and requires matching multiplicities. For each paired block, it computes channelwise products
\begin{equation}
R_{u,m_3} = \sum_{m_1,m_2} \mathbf{K}^{\rho_3}_{\rho_1,\rho_2}(m_1,m_2,m_3) \mathbf{x}^{(1)}_{u,m_1}\mathbf{x}^{(2)}_{u,m_2},
\end{equation}
followed by an optional per-channel weight
\begin{equation}
[\mathbf{x}^{(1)} \otimes_{\mathbf{W}} \mathbf{x}^{(2)}]^{(u,m_3)} = \mathbf{W}_u R_{u,m_3}.
\end{equation}
It does not mix different multiplicity channels and only connects corresponding input block pairs. The parameter count per instruction is $U$.

\subsection{Runtime Comparison}
\label[appsec]{sec:runtime_comparison}

We benchmark SpinGTP at two levels, including the depthwise tensor-product kernel and the full Equiformer-style model. All timings use a single NVIDIA H200 GPU and report time after warmup.

\paragraph{Depthwise tensor-product backend.}
We compare three implementations of the same depthwise tensor product, including direct Gaunt contraction, precontracted execution, and Triton-fused precontracted execution. The benchmark uses edge multiplicity one, \(s_{\max}=1\), and a depthwise path-expanded layout. The batch dimension is the number of independent edge-level tensor-product samples. We evaluate batch sizes \(\{1,32,128,1024\}\), and selectively report over these batch sizes for each multiplicity. For \(L_{\max}=1,2\), both input irreps and output irreps are up to \(L_{\max}\). For \(L_{\max}=3\), the first input is capped at \(L_{\max}^{(1)}=1\), while the second input and output remain at \(L_{\max}=3\).

\begin{table}[h]
\centering
\caption{
Depthwise SpinGTP tensor-product runtime on H200. Times are median milliseconds over 20 iterations. \(B\) is the number of independent edge-level tensor-product samples per call. Speedups are relative to direct Gaunt contraction.
}
\label{tab:tp_backend_runtime_all_lmax}
\begin{small}
{
\begin{tabular}{ccccccccc}
\toprule
\makecell{\(L_{\max}^{(1)}\)} &
\makecell{\(L_{\max}^{(2)}\)} &
\makecell{\(L_{\max}^{(\mathrm{out})}\)} &
\makecell{Multiplicity} &
\makecell{\(B\)} &
\makecell{Direct Gaunt\\(ms)} &
\makecell{Precontracted\\(ms)} &
\makecell{Triton Precontracted\\(ms)} &
\makecell{Gaunt / Triton} \\
\midrule
1 & 1 & 1 & 16  & 128  & 5.460  & 0.537 & 0.120 & \(45.4\times\) \\
1 & 1 & 1 & 64  & 128  & 5.493  & 0.466 & 0.119 & \(46.2\times\) \\
1 & 1 & 1 & 128 & 128  & 4.562  & 0.443 & 0.115 & \(39.7\times\) \\
\midrule
2 & 2 & 2 & 16  & 32   & 21.570 & 0.997 & 0.259 & \(83.15\times\) \\
2 & 2 & 2 & 64  & 32   & 29.912 & 1.769 & 0.748 & \(39.99\times\) \\
2 & 2 & 2 & 128 & 32   & 33.528 & 1.837 & 0.559 & \(59.96\times\) \\
\midrule
2 & 2 & 2 & 16  & 128  & 16.144 & 0.999 & 0.258 & \(62.5\times\) \\
2 & 2 & 2 & 64  & 128  & 29.561 & 1.771 & 0.752 & \(39.3\times\) \\
2 & 2 & 2 & 128 & 128  & 33.306 & 1.838 & 0.572 & \(58.2\times\) \\
\midrule

1 & 3 & 3 & 16  & 128  & 12.714 & 0.678 & 0.228 & \(55.9\times\) \\
1 & 3 & 3 & 64  & 128  & 20.610 & 1.080 & 0.608 & \(33.9\times\) \\
1 & 3 & 3 & 128 & 128  & 22.546 & 1.344 & 0.681 & \(33.1\times\) \\
\midrule
2 & 2 & 2 & 64  & 1024 & 29.912 & 1.777 & 2.090 & \(14.3\times\) \\
1 & 3 & 3 & 64  & 1024 & 23.941 & 1.394 & 1.865 & \(12.8\times\) \\
1 & 3 & 3 & 128 & 1024 & 22.628 & 1.350 & 2.240 & \(10.1\times\) \\
\bottomrule
\end{tabular}
}
\end{small}
\end{table}

Precontraction gives a consistent order-of-magnitude improvement over direct Gaunt contraction. Triton-fused precontraction is fastest for most small- and medium-batch settings. At large \(B\) and high multiplicity, the standard precontracted implementation can be faster because its batched matrix multiplications better saturate tensor cores.

\paragraph{End-to-end Equiformer timing at \(s=0\).}
We also compare the standard Equiformer against an Equiformer model using the SpinGTP tensor-product implementation. All SWSH irreps are constrained to \(s=0\), so the comparison isolates implementation speed. Both models use the same 6-layer Equiformer-style architecture, \(5.0~\text{\AA}\) cutoff, 500 maximum neighbors, 80 atoms per graph, no PBC, and energy-only forward evaluation.

The GTP-based model is \(1.21\times\)-\(1.89\times\) faster in forward evaluation. At practical batch sizes of 32-64 graphs, it sustains 65-72k atoms/s, compared with 38-44k atoms/s for the standard Equiformer. Because \(s=0\) is enforced, this gain comes from the tensor-product implementation.

\begin{table}[h]
\centering
\caption{
Full-model forward runtime for Equiformer and Equiformer with GTP at
\(s=0\). Times are median milliseconds over 10 iterations.
}
\label{tab:eq_swsh_forward_runtime}
\begin{small}
{
\begin{tabular}{ccccccc}
\toprule
\makecell{Graphs} &
\makecell{Atoms} &
\makecell{Equiformer\\(ms)} &
\makecell{GTP, \(s=0\)\\(ms)} &
\makecell{Speedup} &
\makecell{Equiformer\\(atoms/s)} &
\makecell{GTP, \(s=0\)\\(atoms/s)} \\
\midrule
1  & 80    & 23.438  & 19.358 & \(1.21\times\) & 3,413  & 4,133  \\
4  & 320   & 27.144  & 19.283 & \(1.41\times\) & 11,789 & 16,595 \\
16 & 1,280 & 41.729  & 22.049 & \(1.89\times\) & 30,674 & 58,053 \\
32 & 2,560 & 67.695  & 39.385 & \(1.72\times\) & 37,817 & 64,999 \\
64 & 5,120 & 116.298 & 70.917 & \(1.64\times\) & 44,025 & 72,197 \\
\bottomrule
\end{tabular}
}
\end{small}
\end{table}

\section{Training Details}
\label[appsec]{sec:training_details}

\subsection{3BPA} \label[appsec]{sec:3BPA training}
\paragraph{Architecture.}
For 3BPA, our model is built on top of MACE~\citep{batatia2022mace} with the following modifications. First, we use SWSH with $s_{max}=1$ instead of regular spherical harmonics, so each directed edge carries both standard and spin-weighted geometric features up to \(L_{\max}=3\). Second, the equivariant tensor product in each interaction block is replaced by the fully-connected SWSH tensor product, recovering the antisymmetric coupling paths absent from scalar GTP. Third, all linear layers throughout the network including embedding projections, skip connections, and readout layers, are replaced with the specialized SWSH linear layers described in~\cref{sec:swsh_layers}. Additionally, the \(s=0\) product basis uses the same body-ordered symmetric contraction as the MACE baseline, while \(|s|>0\) channels bypass the contraction and are updated via a linear map. Crucially, we add a graph frame as described in~\cref{sec:equiv} that assigns all edges within a molecule a shared gauge, making the scatter-sum over spin-weighted messages equivariant. 

\paragraph{Training Procedure.}
We optimize a weighted energy-and-force loss using Adam (AMSGrad) at learning
rate \(1\times10^{-2}\) and weight decay \(5\times10^{-7}\), under a
ReduceLROnPlateau schedule (factor \(0.8\), patience \(50\)). An exponential
moving average of model weights with decay \(0.99\) is maintained throughout
training. We follow a two-stage scheme: in Stage~1, the model is trained for
up to 2000 epochs with energy and force loss weights of \(1\) and \(1000\)
respectively. In Stage~2, we fine-tune with an increased energy loss weight
of \(25\) to better resolve energy differences between conformers, running
for up to 250 epochs. Final results are reported from Stage~2 checkpoints
selected on validation loss. All experiments use float32 precision with
RMS-forces output scaling. All 3BPA experiments were run on a single NVIDIA RTX A6000 (48\,GB) GPU, with each run taking approximately 63 GPU-hours (56 hours for Stage~1 and 7 hours for Stage~2). Full hyperparameters are summarized in Table~\ref{tab:train_setup_3bpa}.

\begin{table}[t]
\centering
\caption{Training Configuration for 3BPA.}
\label{tab:train_setup_3bpa}
\begin{tabular}{lll}
\toprule
Item & Setting \\
\midrule
Random seeds & \multicolumn{2}{l}{3, 8, 9} \\
Cutoff Radius & \multicolumn{2}{l}{5.0~\AA} \\
Radial basis & \multicolumn{2}{l}{8 Bessel, 5 polynomial cutoff basis} \\
Angular degree & \multicolumn{2}{l}{\(L_{\max}=3\), \(s_{\max}=1\)} \\
Interactions & \multicolumn{2}{l}{2} \\
Correlation order & \multicolumn{2}{l}{3} \\
Hidden irreps & \multicolumn{2}{l}{\(256{\times}(0,0)e + 256{\times}(1,0)o + 256{\times}(2,0)e + 16{\times}(1,1)o\)} \\
Readout MLP irreps & \multicolumn{2}{l}{\(16{\times}(0,0)e\)} \\
Optimizer & \multicolumn{2}{l}{Adam (AMSGrad)} \\
Learning rate & \multicolumn{2}{l}{\(1\times10^{-2}\)} \\
Weight decay & \multicolumn{2}{l}{\(5\times10^{-7}\)} \\
LR schedule & \multicolumn{2}{l}{ReduceLROnPlateau (factor 0.8, patience 50)} \\
Batch size & \multicolumn{2}{l}{5} \\
EMA decay & \multicolumn{2}{l}{0.99} \\
Output scaling & \multicolumn{2}{l}{RMS-forces scaling} \\
Precision & \multicolumn{2}{l}{float32} \\
\midrule
& \textit{Stage 1} & \textit{Stage 2} \\
Energy loss weight & 1 & 25 \\
Force loss weight & 1000 & 1000 \\
Max epochs & 2000 & 250 \\
Patience & 256 & -- \\
\bottomrule
\end{tabular}
\end{table}

\subsection{SPICE-MACE-OFF Chiral Subset}
\label[appsec]{sec:spice training}

\subsubsection{Chirality Classification and Parity Generalization}

\paragraph{Architecture.}
For chirality tasks, we adapt the SpinGTP framework to a graph classification objective by replacing the atomic energy readout with a global MLP-based head. The backbone utilizes spin-weighted spherical harmonics (SWSH) with $L_{\max}=2$ and $s_{\max}=2$ to encode the geometric sensitivity required for enantiomer differentiation. The hidden representation is defined by a specific irrep set: $64\times(0,0)e + 64\times(1,0)o + 8\times(1,1)o + 64\times(2,0)e$. To ensure a rigorous benchmark, baseline architectures including NequIP \cite{nequip}, MACE \cite{batatia2022mace}, and MACE-Gaunt \cite{luo2024enabling} are integrated into the same pipeline using identical cutoff radii ($r_{\max}=5.0$~\AA) and radial basis functions (Table~\ref{tab:train_setup_classification}).

\paragraph{Training Procedure.}
We evaluate model performance through two distinct protocols. First, Standard Classification involves a 2-class task (R vs.\ S) on a random 85/15 split, optimized via AdamW with a learning rate of $1\times 10^{-3}$. Second, Parity Generalization evaluates the model's ability to learn the underlying symmetry operation by training exclusively on R-chiral structures with a regression target of $y=+1$. Generalization is measured by sign accuracy on unseen S-chiral mirror images ($y=-1$). These experiments use Adam with a \texttt{StepLR} schedule and early stopping based on $\mathrm{acc}\_S$ to capture the model's peak generalization capability. We train models on 144 GB NVIDIA463 H200 GPUs. Detailed hyperparameter settings for both tasks are provided in Tables~\ref{tab:train_setup_classification} and \ref{tab:train_setup_parity}.

\begin{table}[h!]
\centering
\caption{Training Configuration for Chirality Classification.}
\label{tab:train_setup_classification}
\begin{tabular}{ll}
\toprule
Item & Setting \\
\midrule
Task & 2-class graph classification (R\_only vs.\ S\_only) \\
Data split & Random 85/15 train/validation split \\
Backbones & MACE, MACE-Gaunt, NequIP, SpinGTP \\
Cutoff & $r_{\max}=5.0$~\AA \\
SWSH structure & $L_{\max}=2$, 2 interactions, 8 Bessel, 5 cutoff basis \\
SWSH hidden irreps & $64{\times}(0,0)e + 64{\times}(1,0)o + 8{\times}(1,1)o + 64{\times}(2,0)e$ \\
Classifier head & MLP, hidden size 64, 2 output logits \\
Optimizer & AdamW \\
Learning rate & $1\times 10^{-3}$ \\
Weight decay & $1\times 10^{-5}$ \\
Batch size & 16 \\
Max epochs & 200 \\
LR schedule & ReduceLROnPlateau (factor 0.5, patience 10) \\
\bottomrule
\end{tabular}
\end{table}

\begin{table}[h!]
\centering
\caption{Training Configuration for Parity Generalization.}
\label{tab:train_setup_parity}
\begin{tabular}{ll}
\toprule
Item & Setting \\
\midrule
Task & Regression to parity target: $y=+1$ (R), $y=-1$ (S) \\
Train/test protocol & Train on R-only, evaluate on R-only and unseen S-only \\
Primary metric & Sign accuracy on unseen S ($\mathrm{acc}\_S$) \\
Loss & Mean-squared error (MSE) \\
Batch size & 16 \\
Max epochs & 200 \\
Learning rate & $1\times 10^{-3}$ \\
Weight decay & $1\times 10^{-5}$ \\
Patience & 30 on $\mathrm{acc}\_S$ \\
\midrule
SWSH setup & $r_{\max}=5.0$, mul=16, layers=2, $L_{\max}=2, s_{\max}=2$ \\
Optimizer & Adam + StepLR (step=50, $\gamma=0.5$) \\
\bottomrule
\end{tabular}
\end{table}

\subsubsection{Energy and Force Prediction}
\paragraph{Architecture.}
We use the same SpinGTP architecture as our 3BPA experiments (\cref{sec:3BPA training}, Table~\ref{tab:train_setup_3bpa}): spin-weighted edge spherical harmonics with \(s\in\{0,1\}\), fully-connected SWSH tensor products in place of scalar Clebsch–Gordan contractions, SWSH linear layers throughout, body-ordered symmetric contraction for the \(s=0\) product basis with direct linear updates for \(|s|>0\) channels, and a shared graph frame that fixes a consistent gauge for spin-weighted messages (\cref{sec:equiv}). The only substantive differences for SPICE are the hidden/readout widths (Table~\ref{tab:train_setup_spice}) and the training-time tensor-product backend configuration used for throughput on large-scale dataset training.

\paragraph{Training Procedure.}
We optimize a weighted energy-and-force loss with weights \(40\) (energy) and \(1000\) (forces), using Adam (AMSGrad) at learning rate \(1\times10^{-2}\) and weight decay \(5\times10^{-10}\), under a ReduceLROnPlateau schedule (factor \(0.8\), patience \(50\)). An exponential moving average of the model weights with decay \(0.99\) is maintained throughout training. Training runs for up to \(100\) epochs with patience \(50\), using float32 precision and RMS-forces output scaling. We parallelize across \(8\) GPUs via PyTorch DDP with per-GPU batch size \(32\) (global batch \(256\)). we train models on eight 144 GB NVIDIA463 H200 GPUs. Final results are reported from the best validation-loss checkpoint. All hyperparameters are summarized in Table~\ref{tab:train_setup_spice}.

\begin{table}[h!]
\centering
\caption{Training configuration for Energy and Force Prediction on SPICE chiral subset.}
\label{tab:train_setup_spice}
\begin{tabular}{ll}
\toprule
Item & Setting \\
\midrule
Random seed & 9 \\
Cutoff radius & 5.0~\AA \\
Radial basis & 8 Bessel, 5 polynomial cutoff basis \\
Angular degree & \(L_{\max}=3\), \(s_{\max}=1\) \\
Interactions & 2 \\
Correlation (body order) & 3 \\
Hidden irreps & \(192{\times}(0,0)e + 192{\times}(1,0)o + 192{\times}(2,0)e + 12{\times}(1,1)o\) \\
Readout MLP irreps & \(12{\times}(0,0)e\) \\
Product basis (\(s=0\)) & body-ordered symmetric contraction (MACE) \\
Optimizer & Adam (AMSGrad) \\
Learning rate & \(1\times10^{-2}\) \\
Weight decay & \(5\times10^{-10}\) \\
LR schedule & ReduceLROnPlateau (factor 0.8, patience 50) \\
Energy loss weight & 40 \\
Force loss weight & 1000 \\
EMA decay & 0.99 \\
Output scaling & RMS-forces scaling \\
Batch size & 32 per GPU (global 256, 8-GPU DDP) \\
Max epochs & 190 \\
Patience & 50 \\
Precision & float32 \\
\bottomrule
\end{tabular}
\end{table}

\subsection{OC20 IS2RE}
\label[appsec]{sec:OC20 training}

\paragraph{Architecture.}
For OC20, we use an SWSH-Equiformer model built on the Equiformer attention backbone. The model uses on-the-fly periodic graphs with cutoff radius \(5.0~\text{\AA}\), at most 500 neighbors, and 128 radial basis functions. The standard equivariant tensor products, linear layers, and normalization layers are replaced by their SWSH counterparts. The main OC20 configuration keeps the persistent node and edge feature space bounded at \(L_{\max}=1\). Spin is introduced through a local spin head constructing bounded \(|s|=1\) SWSH templates $(1,1)o$ in the local node frame and applying a SpinGTP path
\[
    (1,1)o\otimes (1,1)o\to (1,0)e.
\]
This head is enabled only in the attention activation path.

\paragraph{Training procedure.} We train on OC20 \textsc{IS2RE-Direct} on a single NVIDIA H200 GPU without force regression or noisy-node auxiliary supervision. The model is optimized with AdamW and a cosine learning-rate schedule with warmup. Checkpoints are selected by validation energy error. The full configuration is summarized in~\Cref{tab:train_setup_oc20}.

\begin{table}[h!]
\centering
\caption{Training configuration for OC20 \textsc{IS2RE-Direct}.}
\label{tab:train_setup_oc20}
\begin{small}
\begin{tabular}{ll}
\toprule
Item & Setting \\
\midrule
Task & OC20 \textsc{IS2RE-Direct} relaxed-energy prediction \\
Number of layers & 6 \\
Attention heads & 8 \\
Cutoff radius & \(5.0~\text{\AA}\) \\
Max neighbors & 500 \\
Radial basis & 128 Gaussian radial basis functions \\
Radial MLP & [64, 64] \\
Node embedding irreps & 256x(0,0)e+128x(1,0)e \\
Edge SWSH irreps & 1x(0,0)e+1x(1,0)e and 1x(1,1)o  \\
Attention head irreps & 32x(0,0)e+16x(1,0)e \\
MLP hidden irreps & 768x(0,0)e+384x(1,0)e \\
Output feature irreps & 512x(0,0)e \\
Optimizer & AdamW \\
Initial learning rate & \(2\times 10^{-4}\) \\
Weight decay & \(1\times 10^{-3}\) \\
Learning-rate schedule & Cosine \texttt{LambdaLR} \\
Warmup & 2 epochs, warmup factor \(0.2\) \\
Minimum LR factor & \(10^{-2}\) \\
Batch size & 32 \\
Evaluation batch size & 32 \\
Number of workers & 16 \\
EMA decay & 0.999 \\
Max epochs & 20 \\
\bottomrule
\end{tabular}
\end{small}
\end{table}

\end{document}